%% file: main.tex
\documentclass{article}

\usepackage[preprint]{corl_2026} %
\input{headers/preamble}
\input{headers/gls}
\input{headers/notation}

\title{AI Coaching for Accelerating Human Skill Development with Reinforcement Learning}

\author{
  Wei Wang$^1$~~~~Enlin Gu$^1$~~~~Antonio Loquercio$^1$~~~~Haimin Hu$^{\dagger 1,2}$~~~~Rahul Mangharam$^{\dagger 1}$\\
  $^1$University of Pennsylvania~~~~$^2$Johns Hopkins University\\
  \texttt{\{wang100,guenlin,aloque,rahulm\}@engineering.upenn.edu},~~~~\texttt{haimin@cs.jhu.edu} \\ \vspace{-0.2in}
}

\begin{document}

\makeatletter
\let\@oldmaketitle\@maketitle%
\renewcommand{\@maketitle}{\@oldmaketitle%
\vspace{-2em}
\centering
\includegraphics[width=\textwidth]{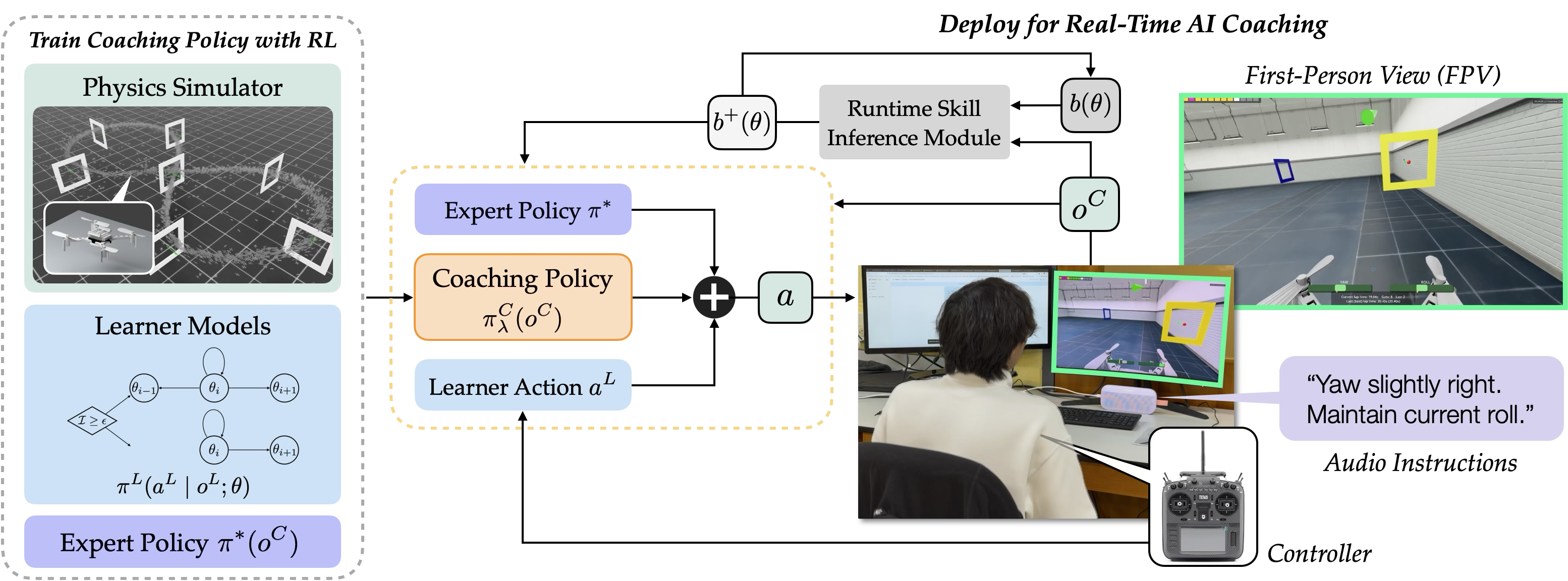}
\captionof{figure}{
Our AI coach accelerates human motor-skill development through strategic scaffolding and stepping back.
\textit{Left:} We train the AI coach using RL by augmenting a physics simulator with a learner action model, a probabilistic automaton governing skill evolution, and a pretrained expert policy.
\textit{Center:} The AI coach adaptively blends the expert action with the learner's command to produce an executed action $\action$.
\textit{Right:} In live coaching of FPV drone racing, the AI maintains a belief $\bel(\hstate)$ over the learner's latent skill and conveys feedback through multimodal channels, including shared control, audio instructions, and visual cues.}
\label{fig:front}
\vspace{-1em}
\bigskip}
\makeatother
\maketitle
\blfootnote{\textsuperscript{$\dagger$}\,Equal advising.}
\blfootnote{Project webpage: \href{https://ai-coaching-drone-racing.github.io/}{\textcolor{darkblue}{\nolinkurl{https://ai-coaching-drone-racing.github.io/}}}}

\begin{abstract}
AI copilots can substantially boost human performance through shared control, but excessive assistance can induce over-reliance and skill atrophy.
This paper studies how an embodied AI agent can act as a \emph{coach} that accelerates human motor-skill development.
We argue that effective coaching requires strategic scaffolding and stepping back that are aligned with the learner's capability, allowing productive failures that drive learning.
We formalize the interactive AI coaching process as a non-cooperative dynamic game in which the learner optimizes task performance while the coach targets the learner's independent competence.
Building on this formalism, we develop a reinforcement learning framework combining adaptive shared control with probabilistic models of the coach's causal influence on skill evolution, enabling tractable training of coaching policies.
A comprehensive user study ($N=33$) on first-person-view drone racing shows significant gains in human learning outcomes over state-of-the-art AI coaching baselines.
\end{abstract}

\keywords{AI Coaching, Reinforcement Learning, Human--Robot Interaction}

\section{Introduction}

Mastering a motor skill, from racing a car at the limit, to piloting a drone through tight gates, requires deliberate practice at the boundary of one's current ability.
Human coaches accelerate this process not by performing the task for the learner, but by orchestrating the right challenges at the right moments~\cite{kapur2008productive,metcalfe2017learning}.
Can an embodied AI agent assume this role?

\begin{figure}[!tbp]
    \centering
    \includegraphics[width=\linewidth]{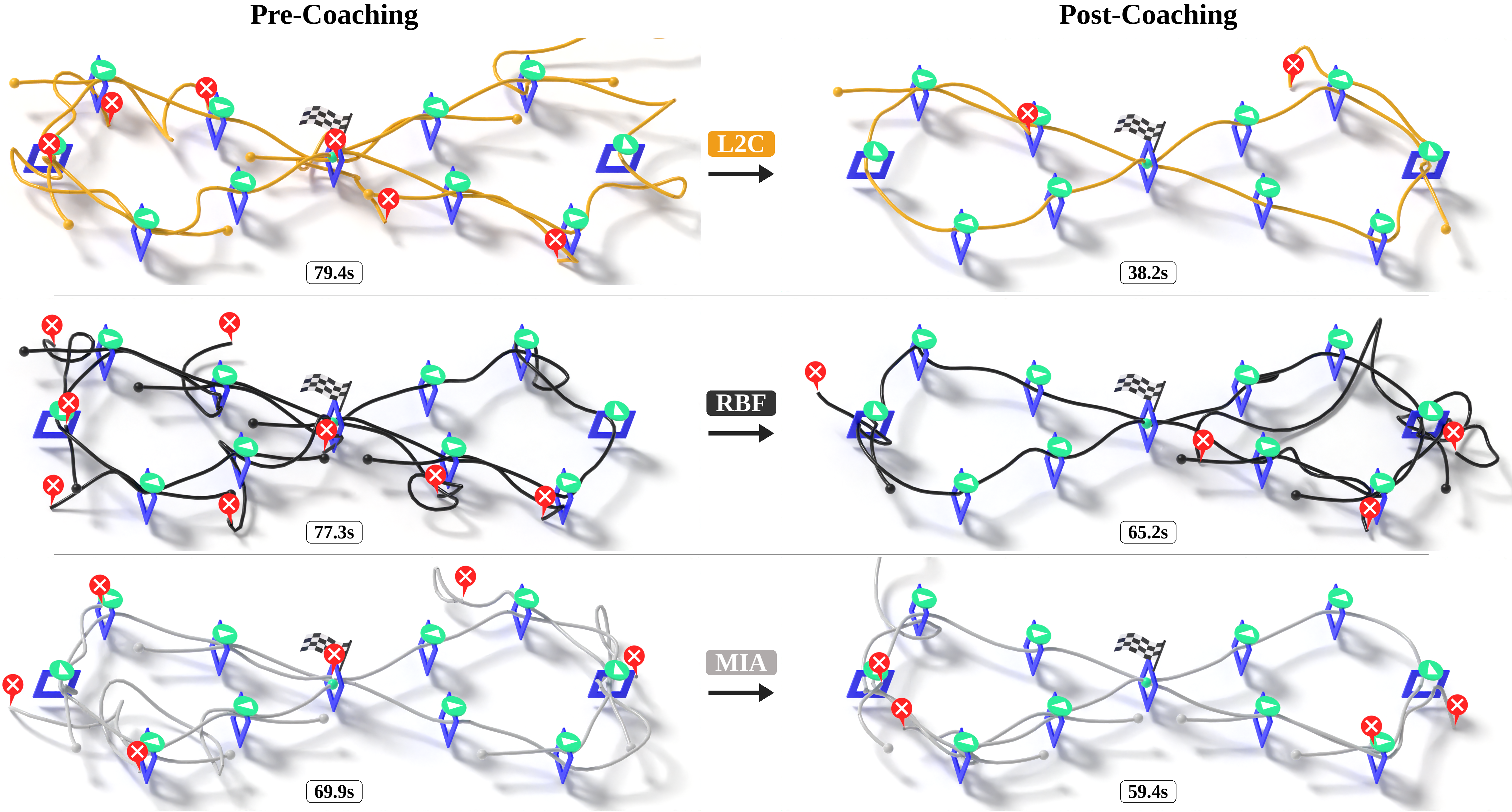}
    \caption{Learning to Coach (L2C) accelerates human skill development. Example pre- and post-coaching trajectories under three coaching methods. The flag marks the start and finish line, the crosses mark failures, and arrows above each gate indicate the flight direction. After just 40 minutes of training, our L2C coach reduced lap time by 27.9\% on average, substantially outperforming both baselines (RBF: 11.3 \%, MIA: 6.2 \%). 
    }
    \vspace{-1.5em}
    \label{fig:human_traj}
\end{figure}

Recent advances in robot learning have produced embodied AI agents that can outperform world champions~\cite{wurman2022outracing,kaufmann2023champion}.
A natural use of such expert AI is as a \emph{shared-control copilot}: a system that fuses the human's commands with the AI's corrective inputs to keep the team performing well despite human error~\cite{reddy2018shared,decastrodreaming,srivastava2025shared,oh2025safety,sha2026efficient}.
This is the dominant paradigm in modern human--AI collaboration, where AI acts as a ``guardian angel'' that prevents failures and smooths execution.
However, helping humans succeed now is fundamentally different from helping them succeed alone in the future.
Persistent AI assistance can induce over-reliance and skill stagnation in human users~\cite{bastani2024generative,macnamara2024does}, depriving them of the productive failures~\cite{kapur2008productive} that drive skill development and retention.
This is also a phenomenon that recent AI safety work characterizes more broadly as \textit{gradual disempowerment}~\cite{kulveit2025gradual}.
Conversely, a coach that withdraws assistance indiscriminately exposes the learner to failures that are uninformative or frustrating, neither of which benefits human skill learning.

\p{Contributions}
\textit{Our key insight is that an effective AI coach must neither overprotect nor disengage from the learner; it should strategically step back when doing so turns mistakes into catalysts of long-term independent competence.}
We formalize this through the \emph{Human--AI Coaching Game}, a novel class of non-cooperative dynamic games in which a learner optimizes task performance while a coach targets the learner's long-term independent competence.
We develop a reinforcement learning framework combining an adaptive, closed-loop shared control rule, a probabilistic automaton that captures the coach's causal influence on learner skill, and a tractable skill-improvement reward for scalable coaching policy training.
We deploy our AI coach in a user study ($N=33$) of \gls{FPV} drone racing with a high-fidelity simulator, demonstrating significant gains in human learning (reduced lap times and failure counts) over state-of-the-art AI coaches.
To our knowledge, this is the first theory and algorithm that explicitly treat human motor-skill development as the main objective of AI agents, thereby transforming them from passive guardians into personalized mentors.

\p{Related Work}
As embodied AI agents become more capable, they have been increasingly deployed to assist humans in challenging tasks, from drone piloting~\cite{reddy2018shared,backman2023reinforcement} to high-speed car racing~\cite{decastrodreaming,oh2025safety}, through \textit{shared control}: AI augments human commands with its corrective inputs.
While effective at enhancing task performance, recent work has shown that persistent AI assistance may induce over-reliance, leaving the human's skill stagnant or degraded~\cite{bastani2024generative,macnamara2024does,kulveit2025gradual}.

A growing line of research thus asks whether AI can act as a \textit{mentor} that empowers humans to develop and retain skills they can exercise independently.
Closest to our work is Shen et al.~\cite{shen2025cyber}, which gradually decays the assistance level in shared control to reduce learner over-reliance. While conceptually similar to our productive-failure idea, it fades assistance on a \emph{fixed} rule agnostic to both learner skill and physical context. 
Z-COACH~\cite{srivastava2025shared} uses shared autonomy to identify which sub-skills lie within the learner's Zone of Proximal Development~\cite{vygotsky1978development}, but assumes a fixed assistance level and focuses on curriculum design rather than synthesizing a coaching strategy that accelerates human learning on a given task.
Our work differs from these by closing the loop on both the learner's evolving skill and the runtime physical context, learning a coaching policy that adaptively decides when to scaffold and when to step back, and naturally complements curriculum-design approaches.

Belief-space dynamic games~\cite{sadigh2018planning,schwarting2021stochastic,hu2023deception} provide a general mathematical framework for human--robot interaction, in which a robot acts under uncertainty about latent human states such as intent, preferences, or skill.
A widely studied \textit{cooperative} variant of belief game is Assistance Games~\cite{fern2014decision,hadfield2016CIRL,fisac2020PPVA,laidlaw2025assistancezero}, in which a robot learns and acts on the human's latent objective to better assist the human.
To date, however, this framework has been used almost exclusively for \emph{humans teaching robots}, not the other way around.
We extend belief-space dynamic games to AI coaching by introducing a novel \textit{non-cooperative} game that captures the AI's fundamental role as a mentor for long-term human skill development rather than solely maximizing the task performance.

\section{Formalism: The Human--AI Coaching Game}
\label{sec:formulation}

\p{Motivation: The Dilemma in Coaching}
Consider an AI coach helping a learner practice a motor skill, \eg, flying a racing drone through gates (\autoref{fig:front}).
At each moment, the coach must decide how much it \emph{scaffolds} the learner toward task completion, or \emph{steps back} to let them struggle, fail, and learn.
This decision raises a dilemma: intervening aggressively risks \textit{coddling}, leaving the learner over-reliant on AI; withdrawing too early yields uninformative failures and frustration.
Moreover, the learner's true skill is never directly observed and must be inferred from behavior as it evolves.

\p{Human--AI Coaching Game}
Formally, we cast the AI coaching problem as a two-player \gls{POSG}~\cite{hansen2004dynamic}, which we refer to as a Human--AI \ourgame{}.
We argue that coaching is fundamentally a \textit{non-cooperative} game~\cite{basar1998dynamic} where the human and AI have different objectives: 
the learner seeks to maximize the task performance,
while the coach optimizes a pedagogical objective, namely the learner's \emph{independent} competence.
These distinct incentives are critical for coaching success. 
The pedagogical reward encourages the coach to proactively modulate assistance (\eg, scaffolding or stepping back) when doing so accelerates long-term human skill development, even at the cost of short-term task performance.

\begin{definition}
\label{def:coaching_game}
A \ourgame{} is a two-player \gls{POSG} between a human learner~$\human$
and an AI coach~$\ai$, defined by the tuple
$\game = \bigl(\sset,\aset,\transprob,
\hsset,\obsset^\human,\obsset^\ai,\obsfunc^\human,\obsfunc^\ai,
\reward^\human,\reward^\ai,\prob_0,\discount\bigr)$, where:
\begin{itemize}[label={}, topsep=0pt, parsep=0pt, partopsep=0pt]
    \item $\sset$ is the set of physical states: $\state \in \sset$;
    \item $\aset$ is the action set shared by the learner and the coach: $\action^\human, \action^\ai \in \aset$;
    \item $\hsset \subset [0,1]$ is a finite, totally ordered set of latent skill levels: $\hstate \in \hsset$, with $\hstate = 0$ denoting lowest proficiency and $\hstate = 1$ expert-level mastery;
    \item $\transprob(\state', \hstate' \mid \state, \hstate, \action^\human, \action^\ai)$
          is the joint state--skill transition distribution;
    \item $\obsset^\human, \obsset^\ai$ are the observation sets
          for the learner and coach: $\obsh \in \obsset^\human, \obsr \in \obsset^\ai$;
    \item $\obsfunc^\human(\state),
      \obsfunc^\ai(\state, \action^\human)$
      are the observation functions;
    \item $\reward^\human(\state, \action^\human, \action^\ai)$
          is the learner's reward measuring task performance;
    \item $\reward^\ai(\state, \hstate)$ is the coach's reward,
          measuring the learner's capacity for independent performance
          (defined formally below);
    \item $\prob_0(\state, \hstate)$ is the joint initial distribution
          over states and latent skill;
    \item $\discount \in [0,1)$ is the discount factor.
\end{itemize}
\end{definition}
In a \ourgame{}, the learner is \emph{boundedly rational}~\cite{simon1990bounded}: the skill level $\hstate$ is a \emph{capability constraint} that determines the learner's policy class $\policy^\human(\cdot \mid \obsh;\, \hstate)$.
That is, although all learners seek to maximize the same reward $\reward^\human$, a beginner and an expert execute different policies to achieve a proficiency permitted by $\hstate$.
We describe a practical instantiation of this bounded-rationality model in \cref{sec:alg}.
The joint transition $\transprob(\state', \hstate' \mid \state, \hstate, \action^\human, \action^\ai)$ captures how the physical states evolve and, critically, how the coach's actions causally influence the learner's skill progression; we model the latter using a probabilistic automaton model introduced in the next section.
It is also worth noting that the game's information structure is one-sided: the only private information is the learner's latent skill $\hstate$, which is unknown to the coach but governs the learner's behavior.
The coach therefore maintains a belief $\bel_t \in \belspace_\hsset$ over $\hstate$, updated via runtime inference from observed learner behavior.
The coach's reward should depend on how well the learner would perform
\emph{alone}, which in turn depends on the learner's evolving skill $\hstate$.
We formalize this next.

\p{Learner Reward: Assisted Task Performance}
The learner's objective is a task reward collected under the combined effect of their own actions and the coach's intervention: $\reward^\human(\state_t, \action^\human_t, \action^\ai_t) = \reward^\text{task}(\state_t, \action_t)$, where $\action_t$ is the \textit{executed} action obtained by combining the learner's and coach's actions through a blending rule; we introduce a specific choice in~\eqref{eq:blending}.
This reward captures the learner's \textit{pragmatic} incentive: they are happy to lean on the coach and accept more help in exchange for better task performance. The coach, however, is graded on a different, \textit{pedagogical} exam: not how well the learner performs now with help, but how well they will perform later without it.

\p{Coach Reward: \gls{VoI}}
\gls{VoI} scores a state by the learner's expected return when they continue acting from that state \emph{without any AI assistance}:
\begin{equation}
\label{eq:voi}
    \reward^\ai(\state_t; \hstate_t)
    =
    \expectation\!\left[
        \sum_{k=0}^{\infty} \discount^k\,
        \reward^{\task}(\state_{t+k},\, \action^\human_{t+k})
        \;\middle|\;
        \begin{array}{l}
            \action^\human_{t+k} \sim \policy^\human(\cdot \mid \obsh_{t+k}; \hstate_{t+k}), \\[2pt]
            \jstate_{t+k+1} \sim \transprob(\cdot \mid \jstate_{t+k}, \action^\human_{t+k}, \action^\ai_{t+k}=0)
        \end{array}
    \right],
\end{equation}
where $\jstate := (\state,\hstate)$.
Intuitively, \gls{VoI} asks:
\textit{``If the coach were to walk away right now, how well would the learner perform?''}
A coach that maximizes \gls{VoI} is rewarded not for making the learner's performance look good with assistance, but for enabling the learner to succeed \emph{independently}.

\section{Learning to Coach: RL for Accelerating Human Skill Development}
\label{sec:alg}

In this section, we build on the \ourgame{} formalism to develop a scalable coaching policy training and deployment pipeline, which we refer to as \textbf{\gls{L2C}}.

\p{Computational Challenges}
The \ourgame{} is non-trivial to solve for three reasons. 
First, it inherits the intractability of general \glspl{POSG}~\cite{bernstein2002complexity}. 
One might hope to sidestep this by casting the \ourgame{} as an Assistance Game~\cite{fern2014decision,hadfield2016CIRL,fisac2020PPVA}, in which the one-sided private-information structure admits a reduction to a single-agent \gls{POMDP} and for which scalable solution methods have recently emerged~\cite{laidlaw2025assistancezero}. 
However, Assistance Games require an identical, \emph{shared} reward between two collaborating agents, whereas the \ourgame{} is non-cooperative by construction.
Second, the \gls{VoI} reward~\eqref{eq:voi} asks what \emph{would} happen if the coach walked away. 
Therefore, it is a \emph{counterfactual} quantity that diverges from the on-policy training trajectory and is expensive to estimate.
Third, the coach's influence on the learner's skill is \emph{indirect}:
the coach acts through shared control to modify the executed trajectory, while the learner's skill $\hstate$ updates in response to the resulting task outcomes (\eg, success or failure), not necessarily to the coach's action directly.
A standard physics simulator can roll out transition $T(s' \mid s, a)$ but says nothing about how task outcomes affect human skill change.

\p{Overview of Technical Contributions}
Our key insight is that the learner's behavior is largely governed by their skill level $\hstate$ through the skill-conditioned policy $\policy^\human(\cdot \mid \obsh; \hstate)$.
This allows us to fold the learner's policy into the environment dynamics and capture the causal effect of coaching actions on skill progression using a probabilistic finite-state automaton, effectively reducing the two-player game to a \gls{POMDP} amenable to model-free \gls{RL}.
Building on this insight, our contributions are threefold: (i) a closed-loop blending rule that adapts the assistance level $\blend$ to the coach's observations of both the physical environment and the learner behavior; (ii) a reformulation of the \ourgame{} to a single-agent \gls{POMDP}, solvable via RL, through a probabilistic model of the coach's causal influence on learner skill; and (iii) a tractable surrogate reward that is provably consistent with the counterfactual \gls{VoI} objective.

\p{Closed-Loop Blending for Adaptive Coaching}
The AI coach implements shared control by fusing the learner's action with an expert baseline via a linear blending rule:
\begin{equation}
    \label{eq:blending}
    \action = \blend \odot \policy^*(\obsr) + (\mathbf{1}-\blend) \odot \action^\human,
\end{equation}
where $\blend = \policy^\ai_\blend(\obsr) \in [0,1]^{\dim(\aset)}$ is a per-axis \emph{blending vector} that modulates the coach's assistance level, from 0 (no assistance) to 1 (full override), $\policyopt(\obsr)$ is a pretrained \textit{expert policy} that performs the task at a high level of proficiency, and $\odot$ denotes element-wise product.
Linear blending of learner and expert commands is standard in shared control~\cite{decastrodreaming,srivastava2025shared,shen2025cyber}; our formulation extends it to offer three advantages in coaching.
First, the per-dimension blending vector $\blend$ enables more flexible coaching along each independent control axis.
Second, anchoring on a pretrained expert baseline $\policyopt$ decouples task competence from coaching strategy: the coach only learns to \emph{modulate assistance} rather than synthesize full control from scratch, a deliberate design choice that we expect to reduce training difficulty compared to learning a monolithic policy end-to-end.
For the motor-skill domains we target (\eg, \gls{FPV} drone racing), expert policies are readily obtained via performance-oriented deep \gls{RL}~\cite{kaufmann2023champion,wurman2022outracing,pasumarti2025agile}.
Third, unlike prior work that modulates assistance according to a fixed rule with little to no dependence on the runtime \emph{physical} context~\cite{srivastava2025shared,shen2025cyber},
our blending factor is \emph{observation-dependent} and adapts in real time to both the physical environment and the learner's action.
To our knowledge, this is the first formulation in which the coaching strategy \textit{closes the loop} on runtime physical context, enabling spatially and behaviorally targeted interventions.

\begin{wrapfigure}{r}{0.41\textwidth}
    \centering
    \vspace{-1.2em}
    \includegraphics[width=0.41\textwidth]{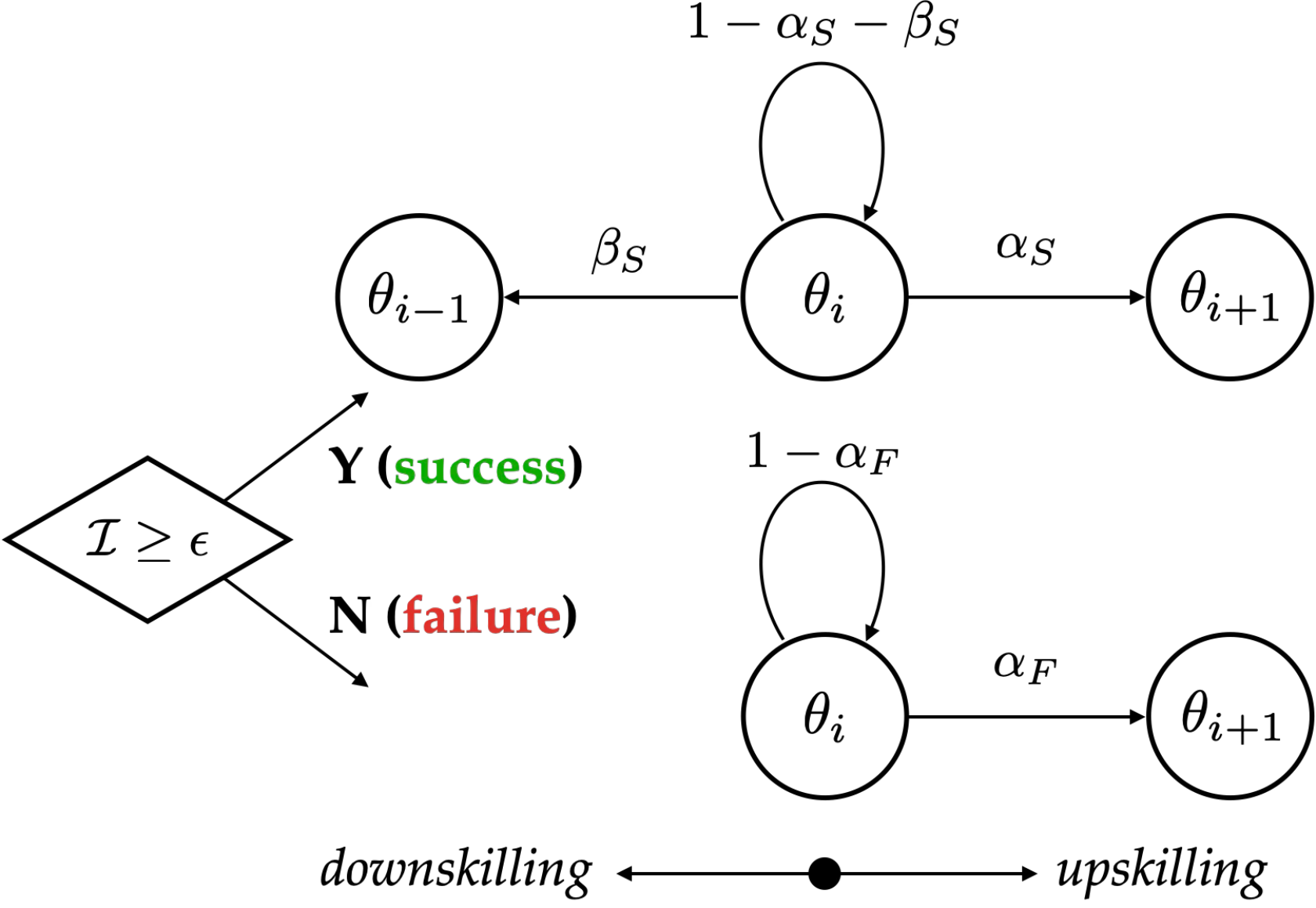}
    \vspace{-0.8em}
    \caption{Hybrid PFA for simulating skill change triggered by success or failure events.}
    \label{fig:PFA}
    \vspace{1em}
\end{wrapfigure}

\p{Simulating Human Skill Change Due to Coaching}
To train a coaching policy, we augment the robot's transition dynamics
with two learner-side components: a skill-conditioned control policy, and a model of how the learner's latent skill $\hstate$ evolves in response to coaching events such as successful or failed task attempts, effectively turning the \ourgame{} into a single-agent \gls{POMDP}. 
Given an expert value function $\qfuncopt(\obsh, \action)$, we model learners of different skill as Boltzmann noisily-rational~\cite{luce1959individual} decision-makers whose action variance shrinks with proficiency:
\begin{equation}
    \label{eq:boltzmann}
    \policy^\human(\action^\human \mid \obsh; \hstate) \propto \exp\!\left( \beta(\hstate) \cdot \qfuncopt(\obsh, \action^\human) \right),
\end{equation}
where the rationality coefficient $\beta(\hstate)$ increases monotonically in $\hstate$:
a simulated novice learner samples actions near-uniformly over the action set, while more skilled learners concentrate their actions more sharply around $\action^* = \argmax_\action \qfuncopt(\obsh, \action)$.
For a bounded action set $\aset$, 
we use the standard Laplace expansion~\cite{bishop2006PRML} to approximate~\eqref{eq:boltzmann} by a Gaussian centered at $\action^*$ and truncated to $\aset$.
Our framework is designed to be \emph{modular} with respect to the learner model: it only requires that the learner be modeled by a $\hstate$-parameterized policy class $\policy^\human(\cdot \mid \obsh; \hstate)$. 
Accurately modeling skill-conditioned learner behaviors remains an open problem, and we want our framework to benefit from future progress on it without having to redesign the algorithmic framework.
The Boltzmann model~\cref{eq:boltzmann} used in our experiments is one tractable choice that is simple and interpretable.
Many alternatives could be plugged into the same training pipeline, including behavior cloning~\cite{gopinath2024computational}, fictitious co-play~\cite{decastrodreaming}, dynamic games~\cite{fisac2020PPVA,mazumdar2025tractable}, inverse games~\cite{liu2023learning,hu2025think}, and mix of policies~\cite{jacob2022modeling,laidlaw2025assistancezero}.

Next, we need a model that captures the coach's causal influence on the learner's skill evolution through task outcomes.
To this end, we propose to model the learner's skill change as \emph{probabilistic transitions} triggered by success and failure events, similar to the human-adaptation framework of Nikolaidis~et~al.~\cite{nikolaidis2017human}.
We leverage a hybrid \gls{PFA} model (\cref{fig:PFA}) that operates in two modes governed by the \textit{performance index} $\pindex_t(\history_t)$, a measure of human learning progress, where $\history_t = (\state_t, \action^\human_t, \state_{t-1}, \action^\human_{t-1}, \ldots)$ is the history of states and human actions.
A natural choice is the cumulative task reward over a horizon: $\pindex_t(\history_t) = \sum_{k=0}^{T-1} \reward^{\mathrm{task}}(\state_{t-k}, \action^\human_{t-k})$.
The learner is deemed to have \emph{failed} if $\pindex_t < \epsilon$ for a prescribed threshold $\epsilon$, set semantically (e.g., causing a collision) or calibrated from a pilot study.
When the learner succeeds ($\pindex_t \ge \epsilon$), they upskill with probability $\alphaS(\hstate)$, downskill with probability $\betaS(\hstate)$, or remain unchanged otherwise. When the learner fails ($\pindex_t < \epsilon$), they upskill with probability $\alphaF(\hstate)$.
Each rate is governed by a sigmoid in $\hstate$ parameterized by a slope and offset with pedagogically meaningful trends: $\alphaS(\hstate)$ is highest for novices (small wins matter most early on), $\betaS(\hstate)$ decreases with $\hstate$ (skill becomes harder to lose once consolidated), and $\alphaF(\hstate)$ grows with $\hstate$ (skilled learners extract more from their mistakes). 
To cover a diverse learner population, we randomize the sigmoid parameters at the start of each training episode to generalize the coaching policy across learners with different learning capabilities.

\p{Training a Coaching Policy with Model-Free Reinforcement Learning}
We train the coaching policy $\policy^\ai_\blend$ using proximal policy optimization (PPO)~\cite{schulman2017ppo}, leveraging privileged access to the learner's skill level $\hstate$ in simulation. The skill level is included in the observation, and we use a tractable surrogate reward $\reward^\ai_t = \hstate_t - \hstate_{t-1}$ in place of the counterfactual \gls{VoI} reward~\eqref{eq:voi}. At deployment, 
the coach maintains an estimate $\hat{\hstate}$ based on belief $\bel_t$ recursively updated using runtime inference.
We now show that optimizing skill directly is sufficient for optimizing \gls{VoI}. Note that maximizing cumulative skill improvement can help reduce variance and, via telescoping, is equivalent to maximizing skill levels up to a positive affine transformation.
For any physical state $\state \in \sset$ and skill level $\hstate \in \hsset$, define the learner's \emph{independent continuation value} as
$\valfunc^{\ind}(\state,\hstate)
\,\triangleq\,
\expectation\!\left[
    \sum_{k=0}^{\infty}\discount^k\,\reward^\task(\state_{t+k},\action^\human_{t+k})
    \;\middle|\;
    \begin{array}{l}
        \state_t=\state,\ \hstate_t=\hstate,\action^\ai_{t+k}\equiv 0,\ \forall k\ge 0
    \end{array}
\right]
$.
Fixing a policy-independent evaluation distribution $\evaldist \in \Delta(\sset)$, we define the \emph{evaluation-averaged \gls{VoI}} as
$
\bar{\reward}^\ai(\hstate)
\,\triangleq\,
\expectation_{\state\sim\evaldist}\!\left[\valfunc^{\ind}(\state,\hstate)\right]
$.
The following result provides a formal bridge between the \emph{training-time} use of privileged skill information and the \emph{deployment-time} objective of maximizing the learner's independent performance; the proof can be found in \autoref{app:proof_main}.

\begin{proposition}[Skill dominance implies \gls{VoI} dominance]
\label{prop:skill_to_voi_equivalence_main}
Suppose $\bar{\reward}^\ai(\hstate)$ is nondecreasing in skill $\hstate$. If, at every skill level, policy $\policy^\ai$ makes the learner at least as likely to upskill (i.e., increase $\theta$) in the next step as $\tilde{\policy}^\ai$ does, then
$
\expectation_{\policy^\ai}\!\left[
    \sum_{t=0}^{\infty}\discount^t\,\bar{\reward}^\ai(\hstate_t)
\right]
\ge
\expectation_{\tilde{\policy}^\ai}\!\left[
    \sum_{t=0}^{\infty}\discount^t\,\bar{\reward}^\ai(\hstate_t)
\right].
$
\end{proposition}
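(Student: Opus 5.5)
The plan is a stochastic-dominance argument on the skill process. I will work with the marginal skill process $\{\hstate_t\}$ on the finite, totally ordered set $\hsset$, and interpret the hypothesis as a one-step stochastic-dominance condition. For every $\hstate\in\hsset$ and every threshold $\eta$, the transition kernel induced by $\policy^\ai$ is required to satisfy
\[
P_{\policy^\ai}(\hstate_{t+1}\ge \eta \mid \hstate_t=\hstate)\;\ge\;P_{\tilde{\policy}^\ai}(\hstate_{t+1}\ge \eta \mid \hstate_t=\hstate).
\]
Because the hybrid \gls{PFA} of \cref{fig:PFA} moves at most one level per step, ``at least as likely to upskill'' means a larger up-probability and no larger down-probability. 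This is exactly first-order dominance of the one-step kernels. (If the kernels depend on the physical state as well, I will average over it for the fixed policy, so that we obtain kernels $K(\hstate,\cdot)$ and $\tilde K(\hstate,\cdot)$ on $\hsset$.)

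I also need monotonicity of one kernel: $\hstate\le\hstate'$ should imply $K(\hstate,\cdot)\preceq_{\mathrm{st}} K(\hstate',\cdot)$. For a birth--death chain with $\pm1$ steps this holds automatically, except when the up/down probabilities make adjacent states cross. I will state it as a mild regularity assumption satisfied by the \gls{PFA}, or construct a monotone coupling directly.

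The key step is an induction showing $\hstate_t\succeq_{\mathrm{st}}\tilde\hstate_t$ for all $t$, starting from the common initial law $\prob_0$. For the inductive step, take any nondecreasing $f$. The function $g(\hstate)=\sum_{\hstate'}K(\hstate,\hstate')f(\hstate')$ is nondecreasing by monotonicity of $K$, and $g\ge\tilde g$ pointwise by the dominance hypothesis. This gives
\[
\expectation[f(\hstate_{t+1})]=\expectation[g(\hstate_t)]\ge\expectation[g(\tilde\hstate_t)]\ge\expectation[\tilde g(\tilde\hstate_t)]=\expectation[f(\tilde\hstate_{t+1})].
\]
The first inequality uses the induction hypothesis and the second uses pointwise dominance. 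Equivalently, I can build a coupling with $\hstate_t\ge\tilde\hstate_t$ almost surely by a quantile construction at each step.

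Applying this with $f=\bar{\reward}^\ai$, which is nondecreasing by assumption and bounded since $\hsset$ is finite and rewards are presumably bounded, gives $\expectation_{\policy^\ai}[\bar{\reward}^\ai(\hstate_t)]\ge\expectation_{\tilde\policy^\ai}[\bar{\reward}^\ai(\hstate_t)]$ for each $t$. I then multiply by $\discount^t$ and sum; boundedness and $\discount<1$ justify exchanging sum and expectation.

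The main obstacle is the monotonicity of the kernel needed in the induction. The hypothesis only compares the two policies at the same skill level, so without monotonicity a lower-skill chain could overtake. I will first try to get it from the one-step-move structure: from $\hstate<\hstate'$ with adjacent levels, the only violation is $\hstate$ moving up while $\hstate'$ moves down. I will check that the coupling can be arranged to avoid this crossing when it can, and otherwise add the condition as an explicit assumption. A secondary point is making precise that the evaluation-averaged $\bar{\reward}^\ai$ depends only on $\hstate$, so the physical trajectory does not enter the comparison.
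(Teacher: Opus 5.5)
Your proof is correct and runs the same engine as the paper's. Both induct on first-order stochastic dominance (FOSD) of the skill marginals, using the tower rule, pointwise kernel dominance and a nondecreasing test function, and then discount and sum. You order the two inequalities so that the dominating kernel $K$ must be monotone, whereas the paper needs only $\tilde g$ nondecreasing; either order works.

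The real difference is scope. The paper restricts to $\beta_S=0$, where every kernel lives on $\{\theta_i,\theta_{i+1}\}$. There the stated hypothesis is literally FOSD, and the monotonicity you call the main obstacle is free: $\tilde g(\theta_i)$ is a convex combination of $\phi(\theta_i)$ and $\phi(\theta_{i+1})$, hence $\tilde g(\theta_i)\le\phi(\theta_{i+1})\le\tilde g(\theta_{i+1})$.

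You allow downskilling. Write $u_i,d_i$ (resp.\ $\tilde u_i,\tilde d_i$) for the up/down probabilities at $\theta_i$. Two things you present as readings are then really added hypotheses.

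First, $d_i\le\tilde d_i$ is not part of ``at least as likely to upskill''. It is nonetheless needed, since a larger $u_i$ bought with a much larger $d_i$ can be strictly worse.

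Second, monotonicity of $K$, i.e.\ $u_i+d_{i+1}\le 1$, is not a property of the PFA. It ties coach-dependent success rates at two different levels. Rare success at $\theta_i$ with large $\alpha_F(\theta_i)$, together with frequent success at $\theta_{i+1}$ with large $\beta_S(\theta_{i+1})$, violates it. Without a condition of this kind the per-time dominance can genuinely fail, so no coupling rescues it in general.

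With these made explicit, your argument proves the downskilling extension the paper only mentions in its setup. Your coupling idea also shows that a weaker cross condition suffices: $K(\theta_{i+1},\cdot)\succeq\tilde K(\theta_i,\cdot)$, i.e.\ $\tilde u_i+d_{i+1}\le 1$, which is implied by monotonicity of either kernel.

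Two caveats remain. Inside the PFA your FOSD hypothesis is narrow: wherever $\alpha_S>\alpha_F$ and $\beta_S>0$, raising success raises both $u_i$ and $d_i$, which forces equal success probabilities there. And averaging over physical states produces time-dependent kernels, so your hypotheses must hold at every $t$. The paper sidesteps this by letting success probabilities depend on skill alone.
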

\cref{prop:skill_to_voi_equivalence_main} shows that, if a coaching policy ${\policy}^\ai$ is able to increase the learner's upskill probability, it also dominates on the evaluation-averaged \gls{VoI} return $\expectation_{{\policy}^\ai}\!\left[\sum_{t=0}^\infty \discount^t \bar{\reward}^\ai(\hstate_t)\right]$.
The state-averaging through $\evaldist$ is necessary because different coaching policies may induce different physical state trajectories; $\bar{\reward}^\ai(\hstate)$ therefore decouples the coach's \gls{VoI} objective from the policy-induced state distribution.
For example, in our \gls{FPV} drone racing deployment to be introduced next, $\evaldist$ corresponds to the distribution of starting states in unassisted evaluation laps.

The training simulator comprises four modules (see \autoref{fig:front} left): a \emph{physics simulator} that advances physical states given the blended human and AI actions, a \emph{skill dynamics module} that evolves $\hstate$ via the hybrid \gls{PFA} (\cref{fig:PFA}), a \emph{learner behavior module} that samples $\action^\human$ from the skill-conditioned Boltzmann policy~\eqref{eq:boltzmann}, and an expert policy $\policyopt(\obsr)$.
Each training episode terminates when an upskill event occurs.
The training procedure is summarized in \cref{alg:training}.

\p{Deployment: A Case Study in Coaching FPV Drone Racing}
We apply \gls{L2C} to \gls{FPV} drone racing~\cite{kaufmann2023champion} as an illustrative example, where the learner pilots a quadrotor through a sequence of gates via a remote controller (\autoref{fig:front} right).
At deployment, the AI coach operates through three key components.
\emph{(i) Assistance modalities:} The coach executes shared control through the blending rule~\eqref{eq:blending}, and augments it with multimodal feedback channels such as verbal instructions and visual cues indicating the error between the expert and the learner's current control.
\emph{(ii) Runtime skill inference:} Since $\hstate$ is unobservable at deployment, the coach maintains a belief $\bel_t$ over $\hstate$, which yields an online estimate $\hat{\hstate}$ and is updated via Bayesian inference
$\bel^+(\hstate) \propto \prob(\obsr \mid \hstate) \bel(\hstate)$.
In our drone racing setting, 
we partition the track into per-gate segments and maintain an independent belief $\bel_t^\ell$ over the learner's skill at each gate $\ell$; the per-gate likelihood $\prob(\obsr \mid \hstate^\ell)$ is a Gaussian on the realized segment passage time, centered at a skill-dependent target time $\tau^\ell_{\text{tgt}}(\hstate^\ell)$ calibrated from a pilot study.
Under this likelihood model, faster passages shift belief mass toward higher skill, slower passages toward lower.
Alternative skill inference schemes include Bayesian Knowledge Tracing~\cite{corbett1994knowledge,piech2015deep}, which typically requires offline model fitting from massive human data.
\emph{(iii) Structured training loop:} The coach provides continuous assistance via~\eqref{eq:blending} in the normal mode; upon failure ($\pindex < \epsilon$), it resets the environment and asks the learner to re-attempt the failed segment under guided practice.
In our drone racing setting, the reset position is one gate before the failed segment, giving the learner room to re-enter the segment with appropriate momentum.
Design details of the coaching system can be found in Appendix~\ref{app:deploy}.

\section{Experimental Results}

\label{sec:experimental-results}

In this section, we present experimental results demonstrating the effectiveness of our \gls{L2C} coach and its superiority over state-of-the-art baselines.
We conducted a user study with $N=33$ participants from diverse drone-flying backgrounds in a high-fidelity \gls{FPV} drone racing simulator  that we built on top of Pasumarti et al.~\cite{pasumarti2025agile} and Isaac Lab~\cite{mittal2023orbit}.
We structure each trial into three phases: a pre- and post-coaching test (2 laps each), and a main coached training session.
To prevent participant fatigue, we limit the training session to 15 laps (around 40 minutes) and restrict the human's control authority to yaw and roll rate while automating the pitch rate and thrust.
Simulation and user study details can be found in \cref{app:training:sim} and \cref{app:human_study}, respectively.

\begin{figure}[!t]
    \centering
    \includegraphics[width=\linewidth]{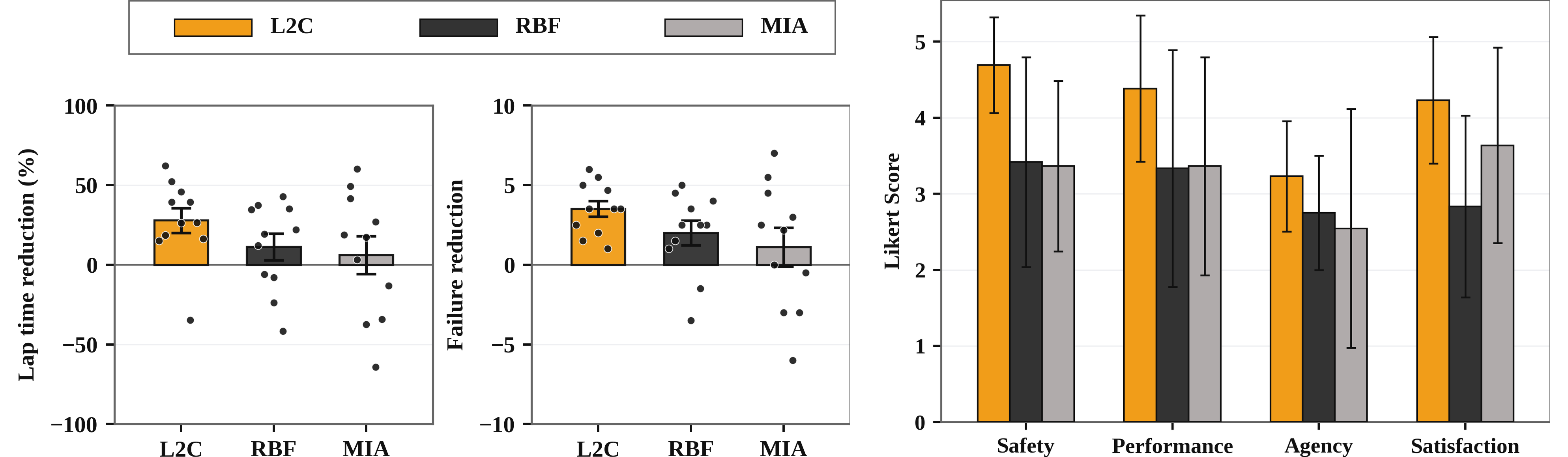}
    \caption{After coaching, learners trained with our \gls{L2C} coach show significant reductions in lap time and failure count (left) and report higher safety, performance, agency, and satisfaction than both baselines (right).
    }
    \vspace{-1.0em}
    \label{fig:experimental-results}
\end{figure}

\p{Baselines and Metrics}
Each participant was randomly assigned to one of three AI coaches: \gls{L2C} and two baselines: \gls{RBF}~\cite{shen2025cyber}---a recently proposed AI coaching scheme that gradually fades the assistance level over the training session following a fixed curve and modulation rule agnostic to learner skill and physical context, and \gls{MIA}---an RL-based AI copilot, following human-centered safety filters~\cite{oh2025safety}, that minimally modifies the learner action to achieve safety and task completion.
To ensure a fair comparison, \gls{L2C} and \gls{MIA} use the same expert policy $\policy^*$; \gls{MIA} is calibrated so that its behavior in absence of human input matches that of $\policy^*$.
All three methods use the same language instructions and visual cues as additional coaching modalities (see details in \cref{app:deploy}).
Based on self-reported drone-operating experience measured on a five-point scale, we found no statistically significant differences in initial skill levels across the three groups.
We measure human skill change with two metrics: lap time and failure count (collisions and timeouts; a gate must be passed within 8 seconds).
We further collect subjective evaluations of the coaching experience via a post-trial questionnaire.

\p{Hypotheses}
We structure our main results around the following hypotheses:
\begin{itemize}[label={}, topsep=0pt, itemsep=1pt, leftmargin=*]
    \item \textbf{H1:} \gls{L2C} reliably improves the learner's performance in both lap time and failure count.
    
    \item \textbf{H2:} Neither baseline replicates \gls{L2C}'s dual improvement.
    
    \item \textbf{H3:} \gls{L2C} produces larger mean improvements than each baseline on both outcomes.
\end{itemize}

\p{Results}
For \gls{L2C}, a paired-samples $t$-test comparing pre- and post-coaching lap times yielded a mean reduction of 27.9\% with $p = 0.005$ and Cohen's $d_z = -1.08$ (large).
Similarly, a paired $t$-test on per-lap failure count yielded a mean reduction of 3.52 failures per lap with $p < 0.001$ and Cohen's $d_z = -2.13$ (very large).
For \gls{RBF}, paired tests showed no reliable change in either lap time (mean reduction 11.3\% with $p = 0.21$ and $d_z = -0.41$), but the paired test on failure count showed significance (mean reduction 2 failures per lap with $p = 0.027$, $d_z = -0.78$) at roughly one-third the magnitude of \gls{L2C}'s effect.
For \gls{MIA}, paired tests showed no reliable change in either lap time (mean reduction 6.2\% with $p = 0.62$ and $d_z = -0.15$) or failure count (mean reduction 1.11 failures per lap with $p = 0.38$ and $d_z = -0.28$).
Those results validate \textbf{H1} and \textbf{H2}.

\begin{figure}[!t]
    \centering
    \includegraphics[width=\linewidth]{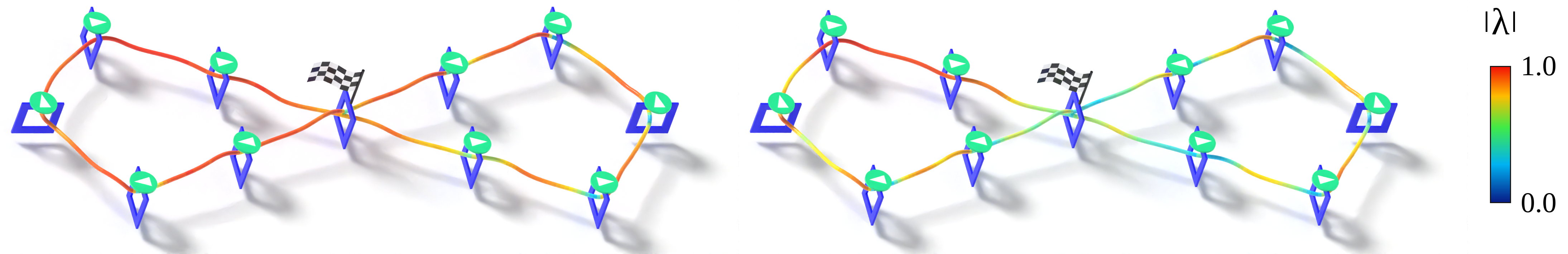}
    \caption{\gls{L2C} adjusts assistance based on estimated skill $\hat{\hstate}$ and physical context. \emph{Left:} $\hat{\hstate}=0$. \emph{Right:} $\hat{\hstate}=0.5$.
    }
    \vspace{-1em}
    \label{fig:l2c-alpha-visualization}
\end{figure}

We further ran Welch's two-sample $t$-tests with Holm correction applied within each outcome.
On lap time, L2C outperformed MIA ($\Delta = -21.7$ \%, Hedges' $g = -0.63$) and RBF ($\Delta = -16.6$ \%, $g = -0.60$); on failure count, L2C outperformed MIA ($\Delta = -2.41$ failures per lap, $g = -0.76$) and RBF ($\Delta = -1.52$ failures per lap, $g = -0.67$).
All four contrasts favor \gls{L2C} with medium-to-large effect sizes between 0.60 and 0.76 and $p$-values between 0.09 and 0.16 (expected given the sample-size limit at $n = 11$ per group; \gls{L2C} is also the only method to reach within-subject significance on both outcomes).
This result supports \textbf{H3}.

The objective metrics in \autoref{fig:experimental-results} show that \gls{L2C} achieves larger reductions in lap time and failure count than both baselines.
On the post-trial questionnaire, \gls{L2C} also received higher subjective ratings than both baselines on \textit{safety} (``Compared to my first race, I flew more safely in the final race.'') and \textit{performance} (``I believe my overall drone racing performance has improved since my first race.'') than both baselines, consistent with its objective improvements.
Higher \textit{agency} (reverse-coded from ``The AI coach intervened too much throughout the race.'') and \textit{satisfaction} (``Overall, I am happy with how the training went.'') ratings further suggest that adaptive coaching preserves user control and improves the perceived training experience.
\autoref{fig:human_traj} provides qualitative examples of how human trajectories change after coaching, complementing aggregate metrics in \autoref{fig:experimental-results}.
Finally, in \autoref{fig:l2c-alpha-visualization}, we inspect how \gls{L2C} modulates assistance based on the estimated learner skill $\hat{\hstate} \in [0,1]$ and physical context.
The magnitude of blending vector $\blend = \policy^\ai_\blend(\obsr)$ along example trajectories shows two emerging forms of adaptation: 
across learners, a novice ($\hat{\hstate}=0$, left) receives more assistance than a more skilled learner ($\hat{\hstate}=0.5$, right); 
within each trajectory, $|\blend|$ varies sharply around the gates, the task-critical states that determine whether the learner collides or passes successfully.

\section{Conclusions}

\p{Limitations and Future Work}
While our results demonstrate the promise of learning-based AI coaching, we acknowledge several limitations and outline directions for future work.
Although our \gls{PFA} appeared to be sufficient for modeling learner behaviors in drone racing, the simulated learners may not capture the full variability of real human learning in more complex motor-skill training settings.
Future work could explore data-driven alternatives such as behavior cloning~\cite{gopinath2024computational}, world models~\cite{decastrodreaming}, or mix of policies~\cite{jacob2022modeling,laidlaw2025assistancezero}.
Our current system learns only the assistance modulation of physical control, while verbal and visual cues are predefined and triggered by fixed rules.
Future work could jointly optimize a multimodal coaching policy with a foundation-model backbone.

\p{Summary}
As AI systems grow more capable, the question is no longer whether they can replace human skill, but how they can help build it.
We have argued, and shown empirically, that optimizing for independent human competence gives both a principled definition of human-empowering AI and a practical recipe for building one.
Our game-informed RL framework produces a coach that measurably accelerates human skill development in a comprehensive drone racing user study. The same principle matters far beyond motor-skill training.
Coding agents, in particular, are currently optimized for task performance, with no explicit incentive for what the human retains.
Our framework offers a concrete alternative: AI systems that not only assist their users but also help them grow.

\clearpage

\acknowledgments{The authors thank Hongrui Zheng and Helen Loeb for insightful discussions.}

\bibliography{references}  %

\appendix

\section{Coaching Policy Training Details}
\label{app:training}

\subsection{Drone Racing Simulation}
\label{app:training:sim}

We implement the FPV drone-racing task as a vectorized direct-RL environment in Isaac Lab. Our simulator and low-level quadrotor dynamics follow the agile drone-racing simulation setup of Pasumarti et al.~\cite{pasumarti2025agile}, adapted to our coaching task. The simulator contains a small quadrotor, a static ground plane, and a sequence of collidable gates loaded from USD assets. In training, the scene is replicated across many parallel environments; in the default configuration we use 4096 environments. The task objective is to pass a fixed sequence of square gates quickly while avoiding crashes, altitude violations, and gate timeouts.

\begin{figure}[hbt]
    \centering
    \includegraphics[width=\linewidth]{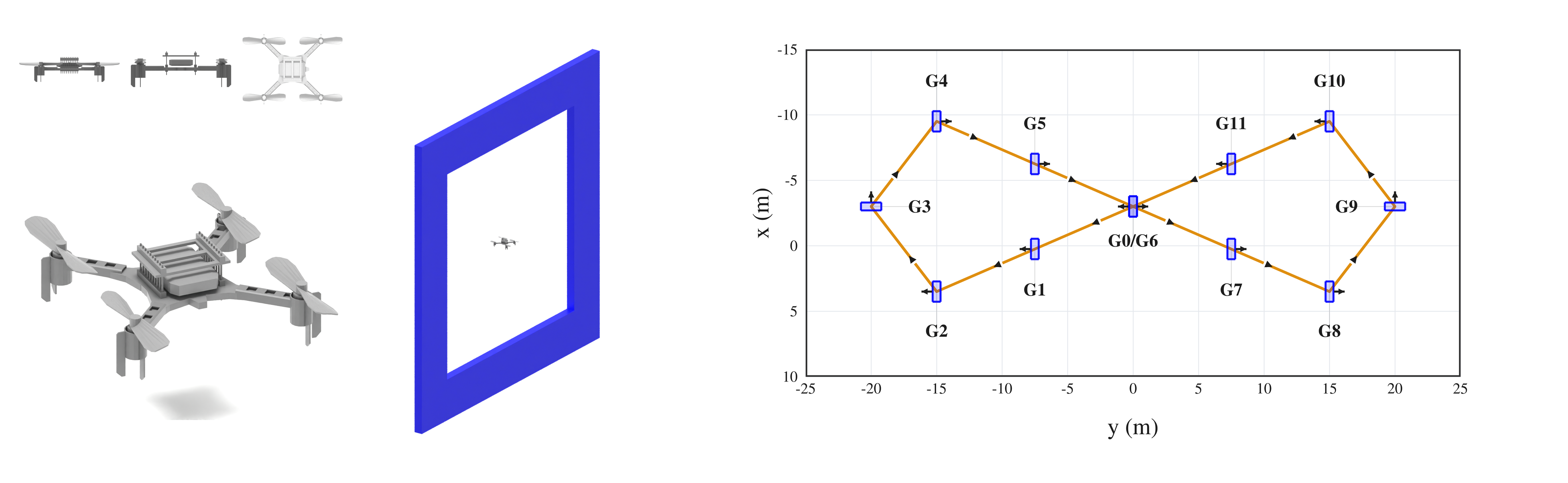}
    \caption{Drone-racing simulation overview. \emph{Left:} visualization of the quadrotor physical model used. \emph{Middle:} true-scale size illustration of the quadrotor relative to the gate opening. \emph{Right:} track-layout visualization showing gate order, positions, headings, and traversal direction.}
    \label{fig:simulation-overview}
\end{figure}

We use the same quadrotor dynamics as in~\cite{pasumarti2025agile}. The high-level policy runs at 50 Hz and outputs collective thrust and desired body-rate commands. Isaac Lab physics and the body-rate PID loop run at 250 Hz, so each policy action is held for five physics steps. The body-rate controller maps commands to a wrench, which is allocated to motors before thrust, torques, and aerodynamic drag are applied to the rigid body.

\begin{itemize}[leftmargin=*]
    \item \textbf{Track and objective.} The default \texttt{figure8flat} course contains 12 gates arranged as upper/lower loops around a shared center crossing. Gates are passed in a fixed cyclic order, and the learner is evaluated by lap completion time and failure events.
    \item \textbf{Gate geometry and passing.} Each gate has a 1 m square opening and is placed from a waypoint pose $[x,y,z,\phi,\theta,\psi]$. For the \texttt{figure8flat} course, the gate $xy$ positions and headings are visualized in the right track-layout panel of \autoref{fig:simulation-overview}, and all gates use the fixed height $z=2.0\,\mathrm{m}$. A gate is counted as passed only when the line segment from the previous to the current drone position intersects the current gate plane inside the square opening and crosses in the correct direction.
    \item \textbf{Observation.} The policy observation combines drone state and task-relative state: body angular velocity, global position, body-frame linear velocity, attitude quaternion, the drone position expressed in the current gate frame, and the current gate heading.
    \item \textbf{Action and frequency.} The policy action is a four-dimensional normalized command $\mathbf{a}=[a_0,a_1,a_2,a_3]\in[-1,1]^4$, corresponding to collective thrust and desired roll, pitch, and yaw rates. The action is evaluated at 50 Hz and held for five 250 Hz physics/PID steps.
    \item \textbf{Termination and reset.} Episodes reset on contact/crash, altitude above 6.0 m, altitude below 0.1 m after a 1.5 s ground-contact grace period, or failure to reach the next gate within 7.0 s.
\end{itemize}

\subsection{Network Architecture}
\label{app:training:nn}

We trained two policies using PPO: an expert drone-racing policy $\policy^*$ and an L2C coaching policy $\policy^\ai_\blend$. Both use separate actor and critic multilayer perceptrons with ELU activations. The actor hidden dimensions are $[128,128]$, and the critic hidden dimensions are $[512,256,128,128]$. Each actor outputs the mean of a diagonal Gaussian policy with a learned per-action standard deviation initialized to 1.0; at evaluation time we use the actor mean.

The expert policy receives a 17-dimensional observation consisting of body angular velocity, world position, body-frame linear velocity, world-frame orientation quaternion, drone position relative to the current gate in the gate frame, and current gate yaw. Empirical observation normalization is disabled. The actor of $\policy^*$ outputs a four-dimensional normalized command comprising collective thrust and roll, pitch, and yaw rates.

The L2C coaching policy $\policy^\ai_\blend$ receives the same 17-dimensional racing observation concatenated with the learner level $\theta$, giving an 18-dimensional coach observation. The coach actor outputs a two-dimensional blending vector $\lambda$ for roll and yaw assistance modulation. Because participants control only roll and yaw in our human study, $\lambda$ is instantiated only on these two human-controlled axes; thrust and pitch are handled by the
expert policy
and thereby not coached.

\subsection{PPO Training}
\label{app:training:ppo}

We train both policies with PPO in Isaac Lab. First, we train an expert policy capable of racing the drone at high competence. Second, we freeze this expert policy and train the L2C coach. \autoref{tab:ppo-hyperparameters} reports the PPO settings for both training stages; italic entries indicate settings that differ between the two PPO runs. The coaching policy training loop is summarized in \cref{alg:training}.

\begin{table}[!t]
    \centering
    \caption{PPO hyperparameters for the expert policy and the L2C coach policy. Italic entries indicate settings that differ between the two PPO runs.}
    \label{tab:ppo-hyperparameters}
    \small
    \setlength{\tabcolsep}{6pt}
    \renewcommand{\arraystretch}{1.28}
    \begin{tabularx}{\linewidth}{@{}>{\raggedright\arraybackslash}p{3.0cm}>{\raggedright\arraybackslash}X@{}}
        \toprule
        \textbf{Parameter} & \textbf{Setting} \\
        \midrule
        Learned policy & \emph{Expert: $\policy^*$}\newline \emph{Coach: $\policy^\ai_\blend$}\newline The expert policy is separately trained and frozen during coaching policy training \\
        \addlinespace[0.25em]
        Observation / action & \emph{Expert $\policy^*$: 17-D observation, 4-D action (thrust, roll rate, pitch rate, and yaw rate)}\newline \emph{Coach $\policy^\ai_\blend$: 18-D observation, 2-D blending vector $\lambda$}
        \\
        \addlinespace[0.25em]
        Parallel environments & 4096 \\
        \addlinespace[0.15em]
        Rollout length & 24 policy steps per environment (0.48 s at 50 Hz) \\
        \addlinespace[0.15em]
        Optimization & 4 minibatches, 5 epochs \\
        \addlinespace[0.15em]
        Learning rate & $5\times10^{-4}$ with adaptive KL schedule; target KL $= 0.01$ \\
        \addlinespace[0.15em]
        Discount / GAE & $\gamma=0.99$, $\lambda=0.95$ \\
        \addlinespace[0.15em]
        PPO objective & clip $= 0.2$; clipped value loss; value-loss coefficient $= 1.0$ \\
        \addlinespace[0.15em]
        Regularization & max gradient norm $= 1.0$\newline \emph{Expert-policy entropy $= 0.0$}\newline \emph{Coach-policy entropy $= 0.1$} \\
        \addlinespace[0.25em]
        Reward signal & \emph{Expert policy: racing task and speed-regularization rewards}\newline \emph{Coach policy: skill-level difference}\\
        \addlinespace[0.25em]
        Training horizon & Up to 10,000 PPO iterations \\
        \bottomrule
    \end{tabularx}
\end{table}

\begin{algorithm}[!t]
\caption{Coaching Policy Training}
\begin{algorithmic}[1]
\label{alg:training}

\STATE \textbf{Input:} Expert policy $\policy^*$, \gls{PFA}, human policy parameterization~\eqref{eq:boltzmann}, physics simulator

\STATE Initialize a set of parallel environments, each with an initial skill level $\hstate \sim \mathcal{U}([0,1])$ 

\FOR{iteration $= 1, \dots, N_{\text{rollout}}$}

    \FOR{step $= 1, \dots, N_{\text{steps}}$}
        \FOR{each environment}

            \STATE Observe $\obsh_t$, $\obsr_t$, and skill level $\hstate_t$ 

            \STATE Sample human action via~\eqref{eq:boltzmann}:
            $
            a^\human_t \sim \pi^\human(\cdot \mid \obsh_t; \hstate_t)
            $

            \STATE Sample blending vector:
            $
            \lambda_t \sim \policy^\ai_\blend(\obsr_t)
            $

            \STATE Compute shared control via~\eqref{eq:blending}:
            $
            a_t = \lambda_t \odot \pi^*(\obsr_t) + (\mathbf{1} - \lambda_t) \odot a^\human_t
            $

            \STATE Step the environment with the physics simulator and \gls{PFA}, and observe $s_{t+1}$ and $\hstate_{t+1}$

            \STATE Store transition
            $
            (\obsr_t, \lambda_t, \hstate_{t+1}, s_{t+1}, d_t)
            $, where the done flag $d_t$ is set to True and the environment resets if the skill level changes, i.e., $\hstate_{t+1} \neq \hstate_t$; upon reset, the environment resamples $\hstate$ from $\mathcal{U}([0,1])$.  
            
        \ENDFOR
    \ENDFOR

    \STATE Update $\policy^\ai_\blend$ using PPO~\cite{schulman2017ppo} with stored transitions across all parallel environments

\ENDFOR

\end{algorithmic}
\end{algorithm}

\paragraph{Expert policy training.}
The expert policy is trained with both task rewards and auxiliary regularization rewards. The task rewards encourage progress toward the next gate and successful gate passage. The auxiliary terms encourage the drone to face the gate and avoid motion that novices cannot follow, such as excessive speed, tilt, and yaw rate. These auxiliary terms are important for the human-study setting: with only sparse task success rewards, the learned policy can race quickly but often uses non-intuitive headings and turns that are difficult for a human learner to parse and imitate. We therefore guide the expert policy toward a stable, gate-facing flight style that remains efficient but is easier for a human to learn.
The expert policy uses entropy coefficient 0.0 and takes approximately 8 hours to train.
We list and explain important reward components below.
\begin{itemize}[leftmargin=*]
    \item \textbf{Progress reward.} The expert policy receives $5.0\,\Delta(\rho+n_{\mathrm{gate}})$, where $\rho=1-\tanh(\|p_{\mathrm{gate}}-p_{\mathrm{drone}}\|/3)$ and $n_{\mathrm{gate}}$ is the number of passed gates. This delta form rewards forward progress through the cyclic gate sequence.
    \item \textbf{Gate reward.} The expert policy receives $50.0 \cdot \mathbf{1}_{\mathrm{gate}}$ when the drone crosses the current gate plane inside the square opening and in the correct direction.
    \item \textbf{Failure cost.} On a terminated reset, the scalar racing reward is replaced by $-200.0$. This covers crash/contact, altitude violation, and gate-timeout termination. The crash and time terms are logged, but their active coefficients are 0.0 in this run.
    \item \textbf{Neutral terms.} Crash and time terms are tracked as diagnostics with active coefficients 0.0; failure affects the scalar reward through the terminal cost above.
    \item \textbf{Auxiliary terms.} The expert policy training also tracks gate-facing alignment, speed, tilt, and yaw rate reward components with coefficients $0.5$, $-0.015$, $-0.01$, and $-0.01$, respectively. 
\end{itemize}

\paragraph{L2C coach training.}
After training the expert policy, we freeze its weights and reuse it as a fixed corrective reference during L2C coach training. At each timestep, the coach environment samples a simulated learner with skill level $\hstate$, the learner produces an action via the Boltzmann policy in~\eqref{eq:boltzmann}, and the coach outputs a blending vector $\blend$ that combines the expert and learner actions according to the rule in~\eqref{eq:blending}. The resulting executed action drives the physics simulator, and the coach is rewarded based on the change in $\hstate$ governed by the \gls{PFA}. We train the coaching policy with PPO using an entropy coefficient of 0.1 to encourage exploration over the blending space, with the full training run taking approximately 6.5 hours on a single RTX 5080 GPU.

\begin{figure}[t]
    \centering
    \includegraphics[width=\linewidth]{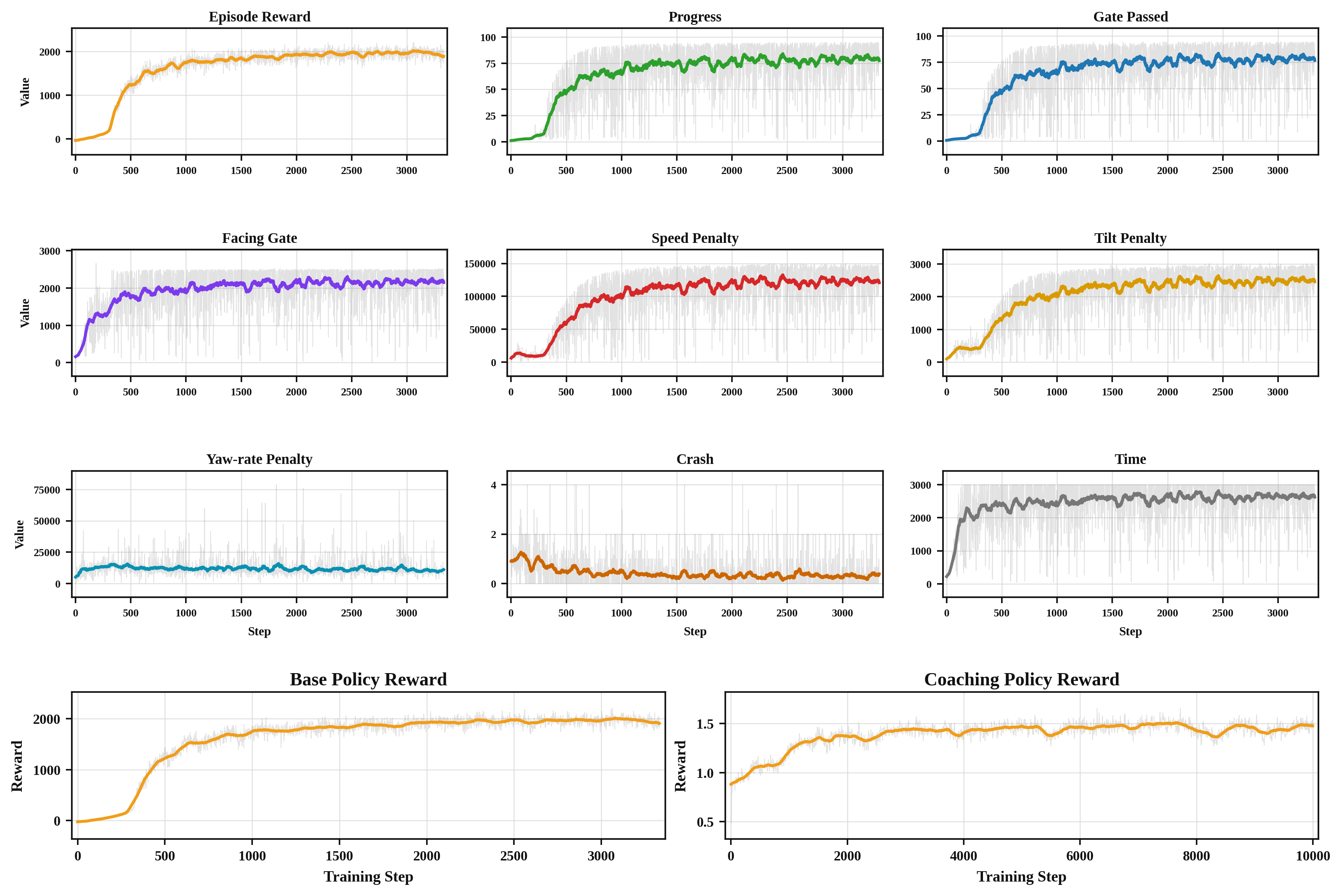}
    \caption{PPO training curves. The top blocks show expert policy reward components, and the bottom row shows the total expert policy and L2C coaching policy training curves. Reward values are normalized internally by Isaac Lab.}
    \label{fig:ppo-reward-panel}
\end{figure}

\subsection{Training Hardware Details}
\label{app:deploy:hardware}
We use a GPU-accelerated Isaac Lab simulation for policy training and user-study deployment. The final experiment stack uses Python 3.10, PyTorch 2.7.0, Isaac Sim 5.1.0, and Isaac Lab 0.48.5. Training the expert policy on a single NVIDIA GeForce RTX 5080 GPU takes approximately 8 hours, and training the L2C coach takes approximately 6.5 hours.

During the human study, all runtime components, including the simulator, policy inference, graphics rendering, controller input, and verbal feedback, ran locally on a single workstation. Expert and L2C policy weights were loaded from PyTorch checkpoints and executed in the same software stack used during training, keeping the deployment self-contained and reproducible.

\section{Deployment Details}
\label{app:deploy}

As shown in \cref{fig:front}, the experiment employs a high-fidelity FPV drone racing simulator for coaching a human pilot to fly a quadrotor through a sequence of gates.
The human participant will operate the drone using a remote controller.
The task will be presented on a screen showing a first-person onboard camera view, matching the visual feedback available in real-world FPV drone racing~\cite{kaufmann2023champion}.

The simulator is implemented in Isaac Lab~\cite{mittal2023orbit} and uses the same environment as the one in policy training.
The system logs the human action, expert action, blended action, drone state, gate passage events, failures, lap or segment times, and assistance magnitude during each trial.

All three study conditions use the same simulator, track, user interface, visual cues, and verbal feedback. The two baselines follow their original implementations and hyperparameters with minimal adaptations required for moving from the car racing domain to drone racing with the two-axis human-control interface.
In summary, the comparison keeps the deployment environment fixed and changes only the assistance method.

The simulated racing environment (\cref{fig:human_traj}) consists of 12 gates in a figure-eight layout (\cref{fig:simulation-overview}), with two gates spatially overlapping at the central crossing. Participants are instructed to fly through the gates in the prescribed order, completing the course as quickly as possible.

\subsection{User Interface and Visual Cues}
We show the user interface presented to participants during the FPV drone racing study in \cref{fig:UI}, rendered in real time in Isaac Lab.
A static drone rotor is rendered to reduce visual clutter.
Top-left displays a color-coded sequence of passed and upcoming gates, with white indicating the current target gate, yellow/green indicating that the gate passage time is slower/faster than that of the last lap, and purple indicating the fastest gate passage time so far.  
The green arrow on the top indicates the relative direction of the target gate (highlighted in yellow) to the drone.
The bottom bars are visual cues, shared across all coaching methods, that aid the human's steering control, where the yellow line segment shows the expert command, temporally smoothed by blending the current and previous expert actions to reduce visual jitter, and the green block shows the learner's action. 
A head-up display (HUD) on the bottom displays the current lap time, current gate index, current lap number, last lap time, and best lap time.

\begin{figure}[H]
    \centering
    \includegraphics[width=0.75\linewidth]{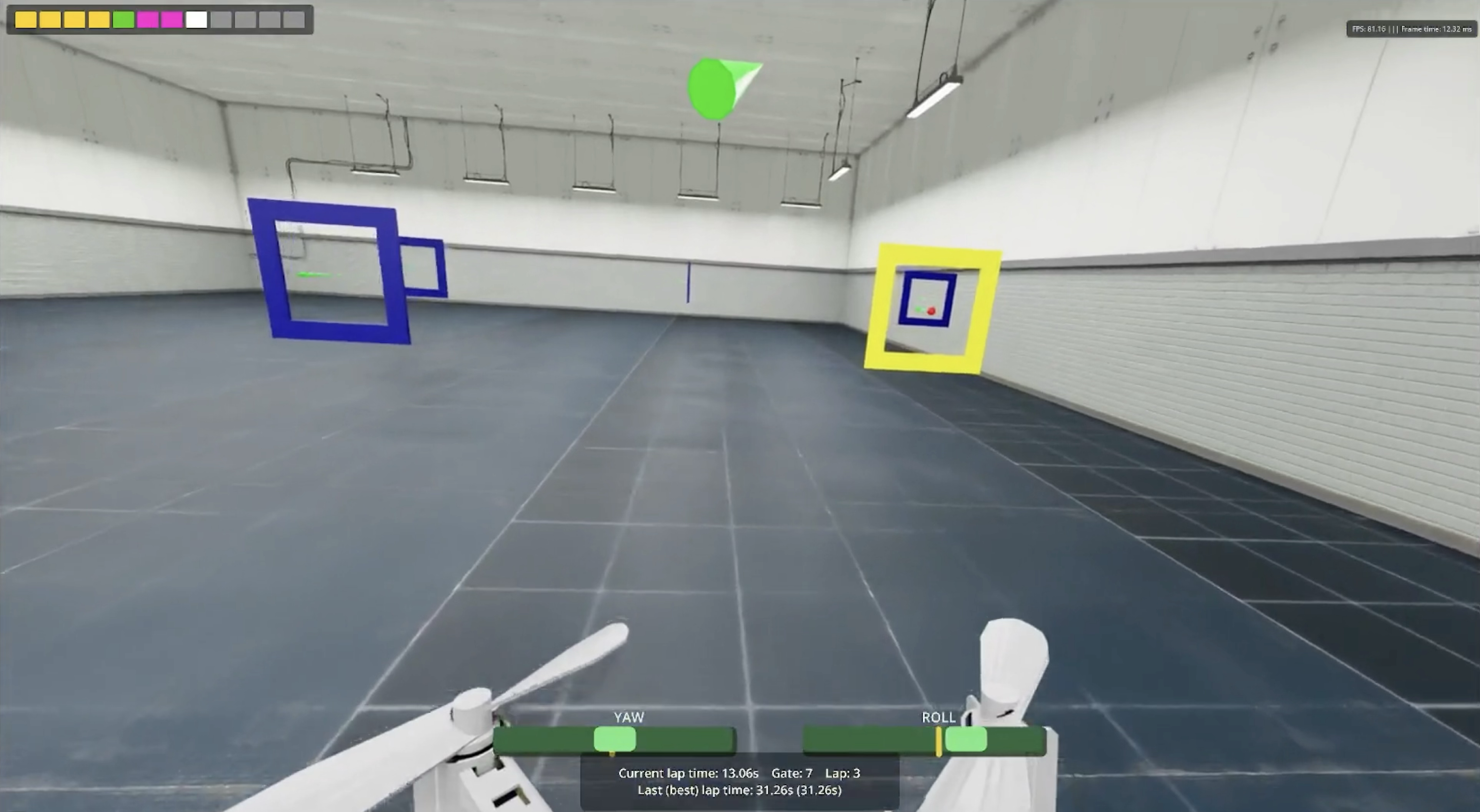}
    \caption{User interface presented to participants during the AI coaching for FPV drone racing study.
    }
    \label{fig:UI}
\end{figure}

\subsection{Verbal Feedback}
All coaches (\gls{L2C}, \gls{RBF}, and \gls{MIA}) use verbal feedback delivered over the simulator audio channel.
We pre-generated a library of short audio clips using a text-to-speech model, organized into two categories: (i) \emph{control suggestions} indicating the corrective direction along each control axis (e.g., \emph{``yaw slightly left''}, \emph{``maintain current roll''}), and (ii) \emph{encouragement} reinforcing successful maneuvers (e.g., \emph{``you are doing well''}).
At runtime, clip selection is driven by the per-axis discrepancy $|a^\human - \pi^*(o^{\ai})|$: when an axis exceeds a directional-error threshold, the corresponding control suggestion is queued; otherwise an encouragement clip is selected.
A 5-second cooldown is enforced between successive clips to prevent auditory overload and to give the learner time to act on each cue.

\section{User Study Details}
\label{app:human_study}

\subsection{Experimental Phases}
We structure each trial into three phases: a pre- and post-coaching test (2 laps each), and a main coached training session.
Since the majority of our participants are novices (have never operated a drone), learning to simultaneously control all four axes (thrust and roll, pitch, and yaw rates) of a racing drone in 40 minutes would be prohibitively difficult.
We therefore restrict the human's control authority to yaw and roll rate while automating the pitch rate and thrust; in effect, the human is only in charge of steering.

In Phase 1 (pre-coaching test), participants will complete an unassisted flight to assess their initial drone-racing skill. The phase ends after two completed laps.

In Phase 2 (coaching), participants will learn to fly the racing drone with assistance from their randomly assigned AI coach.
To prevent participant fatigue, we limit the coaching session to 15 laps (around 40 minutes, depending on learner capability).
Every three coached laps, the coach performs an unassisted \emph{evaluation lap} in which the learner flies alone, providing a clean signal for updating the coach's belief over learner skill level $\bel_t^\ell$ at gate $\ell$.

In Phase 3 (post-coaching test), participants complete two unassisted evaluation laps, mirroring the Phase 1 pre-coaching test. The difference between pre- and post-test performance is our primary measure of coaching-induced skill change.

\subsection{Initial Group Balance}
\label{app:initial-group-balance}
Based on self-reported drone-operating experience measured on a five-point scale (Appendix~\ref{app:questionnaire}), we found no statistically significant differences in initial skill levels across the three groups.
In addition, from Phase 1 data, we verified that the three randomly assigned groups achieved comparable initial objective performance. \autoref{fig:initial-group-balance} reports pre-coaching lap time and pre-coaching total failure count for all participants ($N=33$).
Through a one-way permutation ANOVA test, we found no reliable group difference for either lap time ($p=0.192$) or failure count ($p=0.663$).

\begin{figure}[H]
    \centering
    \includegraphics[width=0.80\linewidth]{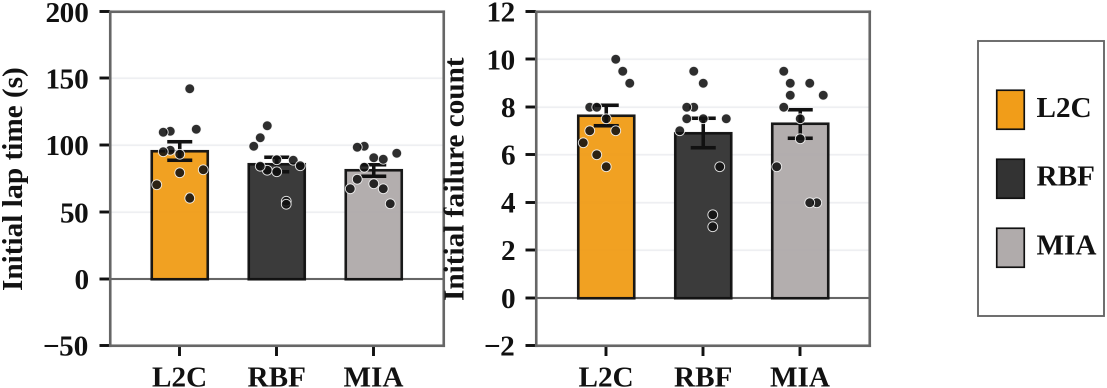}
    \caption{Initial group balance before coaching. \textit{Left:} pre-coaching lap time. \textit{Right:} pre-coaching total failure count. Bars show mean $\pm$ SEM, and dots show individual participants.}
    \label{fig:initial-group-balance}
\end{figure}

\subsection{Questionnaire}
\label{app:questionnaire}
Prior to the trial, participants self-reported their drone-piloting experience on a five-point scale:
\begin{enumerate}[leftmargin=2em, topsep=2pt, itemsep=0pt]
    \item No Experience: Have not operated a drone; have not played drone racing games, flight simulation games, or similar games using a controller.
    \item Casual Experience: Have occasionally operated consumer drones in low-speed scenarios (e.g., photography), or have played flight simulation games with a controller
    \item Regular Experience: Have regularly operated consumer drones, or have regularly played flight simulator games using a controller.
    \item Extensive Experience: Have regularly flown FPV drones, or have regularly practiced drone racing simulators to a proficient level (e.g., completing technical tracks cleanly at pace). Has not competed in organized races.
    \item Competitive Experience: Have competed in organized drone racing events, or regularly trains on drone racing simulators.
\end{enumerate}
Based on the participant feedback,
our pool consisted primarily of novices: 61.1\% reported No Experience, 36.1\% reported Casual Experience, and 2.8\% reported Regular Experience, with no participant reporting Extensive or Competitive Experience.
A one-way ANOVA confirmed no statistically significant differences in initial self-reported skill across the three groups.

\subsubsection{Questionnaire Metrics}
\label{app:questionnaire-metrics}
After Phase 3 concludes,
we collect a post-study questionnaire (5-point Likert scale) assessing subjective learning experience on perceived improvement, agency, and satisfaction (see \cref{tab:questionnaire-metrics}).
The results are shown in \cref{fig:experimental-results}.

\begin{table}[H]
    \centering
    \caption{Post-study survey items.}
    \label{tab:questionnaire-metrics}
    \small
    \begin{tabular}{@{}p{2.3cm}p{7.7cm}p{2.9cm}@{}}
        \toprule
        \textbf{Metric} & \textbf{Questionnaire item} & \textbf{Scoring} \\
        \midrule
        Safety & Compared to my first race, I flew more safely in the final race. & Higher is better \\
        \addlinespace
        Performance & I believe my overall drone racing performance
        has improved since my first race. & Higher is better \\
        \addlinespace
        Agency & The AI coach intervened too much throughout the race. & Reverse-coded; higher plotted value is better \\
        \addlinespace
        Satisfaction & Overall, I am happy with how the training went. & Higher is better \\
        \bottomrule
    \end{tabular}
\end{table}

\normalsize

\section{Proof of \cref{prop:skill_to_voi_equivalence_main}}
\label{app:proof_main}

\p{Setup}
Let $\hstate_t \in \hsset = \{\hstate_1,\ldots,\hstate_N\}$ denote the learner's discrete latent skill level at time $t$, ordered so that
$
\hstate_1 < \hstate_2 < \cdots < \hstate_N
$.
For any physical state $\state \in \sset$ and skill level $\hstate \in \hsset$, define the learner's \emph{independent continuation value}
\begin{equation}
\label{eq:ind_value_equivalence}
\valfunc^{\ind}(\state,\hstate)
\,\triangleq\,
\expectation\!\left[
    \sum_{k=0}^{\infty}\discount^k\,\reward^\task(\state_{t+k},\action^\human_{t+k})
    \;\middle|\;
    \begin{array}{l}
        \state_t=\state,\ \hstate_t=\hstate,\\[2pt]
        \action^\ai_{t+k}\equiv 0,\ \forall k\ge 0
    \end{array}
\right],
\end{equation}
namely, the discounted return the learner would obtain if the coach withdrew assistance from $(\state,\hstate)$ onward.
Fix a policy-independent evaluation distribution $\evaldist \in \Delta(\sset)$ and define the \emph{evaluation-averaged \gls{VoI}}
\begin{equation}
\label{eq:eval_averaged_voi_equivalence}
\bar{\reward}^\ai(\hstate)
\,\triangleq\,
\expectation_{\state\sim\evaldist}\!\left[\valfunc^{\ind}(\state,\hstate)\right].
\end{equation}
We assume $\reward^\task$ is uniformly bounded, so $\bar{\reward}^\ai$ is bounded as well.
For the analysis below, we focus on a simplified, yet pedagogically relevant regime of the hybrid \gls{PFA} in which skill does not decrease within an episode ($\betaS = 0$).
Concretely, from skill level $\hstate_i$ with $i<N$, the learner either remains at $\hstate_i$ or advances to $\hstate_{i+1}$ in one step; the top skill level $\hstate_N$ is absorbing.
If we also consider downskilling, the proof becomes more delicate: the key monotonicity step in \cref{lem:kernel_to_marginal_equivalence} is no longer automatic.
However, one may impose additional conditions, such as stochastic monotonicity of the full skill-transition kernel, to recover the same dominance conclusion.
Let $p_{\mathrm{succ}}^{\policy^\ai}(\hstate_i)$ denote the success probability under coach $\policy^\ai$ at skill $\hstate_i$.
Conditional on success, the learner upskills with probability $\alphaS(\hstate_i)$; conditional on failure, the learner upskills with probability $\alphaF(\hstate_i)$.
The resulting one-step skill-transition kernel under coach $\policy^\ai$ is therefore
\begin{equation}
\label{eq:pfa_kernel_pi}
\pfakernel^{\policy^\ai}(\hstate_{i+1}\mid \hstate_i)
=
p_{\mathrm{succ}}^{\policy^\ai}(\hstate_i)\,\alphaS(\hstate_i)
+
\bigl(1-p_{\mathrm{succ}}^{\policy^\ai}(\hstate_i)\bigr)\,\alphaF(\hstate_i),
\qquad i=1,\ldots,N-1,
\end{equation}
with
\[
\pfakernel^{\policy^\ai}(\hstate_i\mid \hstate_i)
=
1-\pfakernel^{\policy^\ai}(\hstate_{i+1}\mid \hstate_i),
\qquad
\pfakernel^{\policy^\ai}(\hstate_N\mid \hstate_N)=1.
\]
We define the \textit{effective upskill probability} as
$p_{\uparrow}^{\policy^\ai}(\hstate_i)
\,\triangleq\,
\pfakernel^{\policy^\ai}(\hstate_{i+1}\mid \hstate_i)$
for $i=1,\ldots,N-1$.

\begin{assumption}
\label{asm:V_monotone_equivalence}
$\bar{\reward}^\ai(\hstate)$ is nondecreasing in $\hstate\in\hsset$.
\end{assumption}

\Cref{asm:V_monotone_equivalence} states that more skilled learners are no worse off on average when left to act alone, reflecting the basic coaching premise that increased skill corresponds to improved independent competence. 
Note that a more interpretable sufficient condition is the pointwise inequality $\valfunc^{\ind}(\state,\hstate)\le \valfunc^{\ind}(\state,\hstate')$ for every $\state\in\sset$ and every pair $\hstate\le\hstate'$ in $\hsset$, which implies \cref{asm:V_monotone_equivalence} by taking expectation over $\state\sim\evaldist$.

We first show that local one-step kernel dominance propagates to all future skill marginals.

\begin{lemma}
\label{lem:kernel_to_marginal_equivalence}
If two coach policies $\policy^\ai$ and $\tilde{\policy}^\ai$ have the same initial distribution, and $\policy^\ai$ dominates $\tilde{\policy}^\ai$, \ie, $p_{\uparrow}^{\policy^\ai}(\hstate_i) \ge p_{\uparrow}^{\tilde{\policy}^\ai}(\hstate_i)$ for $i\in\{1,\ldots,N-1\}$,
then the induced skill marginals satisfy first-order stochastic dominance (FOSD) at every time:
\begin{equation}
\label{eq:marginal_fosd_equivalence}
\prob^{\policy^\ai}(\hstate_t \ge \hstate_j)
\ge
\prob^{\tilde{\policy}^\ai}(\hstate_t \ge \hstate_j),
\qquad
\forall j\in\{1,\ldots,N\},\ \forall t\ge 0.
\end{equation}
Equivalently, for every nondecreasing function $\phi:\hsset\to\reals$,
\[
\expectation_{\policy^\ai}[\phi(\hstate_t)]
\ge
\expectation_{\tilde{\policy}^\ai}[\phi(\hstate_t)],
\qquad \forall t\ge 0.
\]
\end{lemma}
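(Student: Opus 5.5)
The plan is induction on $t$, using a monotone coupling of two birth-only chains. I work directly with the tail probabilities $G_t(j)\triangleq\prob^{\policy^\ai}(\hstate_t\ge\hstate_j)$ and $\tilde G_t(j)\triangleq\prob^{\tilde{\policy}^\ai}(\hstate_t\ge\hstate_j)$.

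\textbf{Base case.} At $t=0$ the two chains share the same initial distribution, so $G_0=\tilde G_0$.

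\textbf{Tail recursion.} Since skill moves up by at most one level per step and $\hstate_N$ is absorbing, for $j\ge 2$
\[
G_{t+1}(j)=G_t(j)+\prob^{\policy^\ai}(\hstate_t=\hstate_{j-1})\,p_\uparrow^{\policy^\ai}(\hstate_{j-1}).
\]
Writing $\prob(\hstate_t=\hstate_{j-1})=G_t(j-1)-G_t(j)$ gives
\[
G_{t+1}(j)=\bigl(1-p_\uparrow(\hstate_{j-1})\bigr)G_t(j)+p_\uparrow(\hstate_{j-1})\,G_t(j-1).
\]
The analogous recursion holds for $\tilde G$ with $\tilde p_\uparrow$. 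For $j=1$ both tails equal $1$.

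\textbf{Inductive step.} Assume $G_t\ge\tilde G_t$ pointwise. Because $G_t(j-1)\ge G_t(j)$, the right-hand side is nondecreasing in $p_\uparrow(\hstate_{j-1})$. It is also nondecreasing in each of $G_t(j)$ and $G_t(j-1)$, since their coefficients are nonnegative. Hence
\[
G_{t+1}(j)\ \ge\ (1-\tilde p_\uparrow)G_t(j)+\tilde p_\uparrow G_t(j-1)\ \ge\ (1-\tilde p_\uparrow)\tilde G_t(j)+\tilde p_\uparrow\tilde G_t(j-1)=\tilde G_{t+1}(j).
\]
The first inequality uses kernel dominance; the second uses the inductive hypothesis. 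This closes the induction and proves \eqref{eq:marginal_fosd_equivalence}.

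\textbf{Alternative via coupling.} One can instead couple the two chains with a shared uniform $U_t$. Each chain upskills iff $U_t\le p_\uparrow(\cdot)$ at its current level, and one shows by induction that $\hstate_t\ge\tilde\hstate_t$ almost surely. The delicate case is when the two chains occupy different levels: if $\hstate_t>\tilde\hstate_t$, the gap can shrink by at most one, so the order is preserved. I prefer the recursion, since it avoids this case analysis.

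\textbf{Equivalence with the $\phi$ form.} For nondecreasing $\phi$, use the Abel summation
\[
\phi(\hstate_t)=\phi(\hstate_1)+\sum_{j=2}^N\bigl(\phi(\hstate_j)-\phi(\hstate_{j-1})\bigr)\mathbf 1\{\hstate_t\ge\hstate_j\}.
\]
Taking expectations expresses $\expectation[\phi(\hstate_t)]$ as $\phi(\hstate_1)$ plus a nonnegative combination of tail probabilities, which yields the claim. The converse direction follows by taking $\phi=\mathbf 1\{\cdot\ge\hstate_j\}$.

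\textbf{Main obstacle.} The only real subtlety is the monotonicity of the one-step tail map in $p_\uparrow$, which needs $G_t(j-1)\ge G_t(j)$. This holds automatically here because the chain is birth-only (no downskilling), which is precisely why the $\betaS=0$ restriction matters. With downskilling one would instead need stochastic monotonicity of the kernel.
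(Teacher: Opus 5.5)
Your proof is correct. It runs the same induction as the paper, but in tail-probability coordinates rather than with general test functions.

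The paper fixes a nondecreasing $\phi$ and sets $g^{\policy^\ai}(\hstate)=\expectation_{\hstate'\sim\pfakernel^{\policy^\ai}(\cdot\mid\hstate)}[\phi(\hstate')]$. It then chains
\[
\expectation_{\policy^\ai}[g^{\policy^\ai}(\hstate_t)]\ge\expectation_{\policy^\ai}[g^{\tilde{\policy}^\ai}(\hstate_t)]\ge\expectation_{\tilde{\policy}^\ai}[g^{\tilde{\policy}^\ai}(\hstate_t)].
\]
The first step uses pointwise kernel dominance (its Fact 1). The second uses the induction hypothesis applied to the nondecreasing function $g^{\tilde{\policy}^\ai}$ (its Fact 2). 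For $\phi=\mathbf{1}\{\cdot\ge\hstate_j\}$ one gets $\expectation_{\policy^\ai}[g^{\tilde{\policy}^\ai}(\hstate_t)]=(1-\tilde p_\uparrow)G_t(j)+\tilde p_\uparrow G_t(j-1)$. That is exactly your intermediate term, so your two inequalities are Facts 1 and 2 specialized to indicators.

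What each route buys:
\begin{itemize}
\item \emph{Yours} is more explicit. The birth-only structure gives a closed two-term recursion, and you actually prove the tail/test-function equivalence by Abel summation, which the paper only asserts.
\item \emph{The paper's} proves the $\phi$-form directly, which is the form the subsequent proposition uses with $\phi=\bar{\reward}^\ai$. It also isolates the abstract ingredients (a pointwise-dominating kernel and a stochastically monotone comparison kernel), so it transfers to other kernels without rederiving a recursion.
\end{itemize}

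One correction to your ``Main obstacle'' paragraph. The inequality $G_t(j-1)\ge G_t(j)$ holds for every distribution, because $\{\hstate_t\ge\hstate_j\}\subseteq\{\hstate_t\ge\hstate_{j-1}\}$. So this is not where $\betaS=0$ enters. The restriction is what gives the two-term recursion, whose coefficients $1-\tilde p_\uparrow$ and $\tilde p_\uparrow$ are automatically nonnegative. That nonnegativity, i.e.\ monotonicity of the one-step map in $G_t$, is the counterpart of the paper's Fact 2. With a downskill probability $q(\hstate_j)$ the recursion becomes
\[
G_{t+1}(j)=q(\hstate_j)\,G_t(j+1)+\bigl(1-q(\hstate_j)-p_\uparrow(\hstate_{j-1})\bigr)G_t(j)+p_\uparrow(\hstate_{j-1})\,G_t(j-1),
\]
and requiring the middle coefficient to be nonnegative is precisely stochastic monotonicity of the kernel.
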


\begin{proof}
We prove the equivalent test-function characterization by induction on $t$.

\emph{Base case.}
At $t=0$, the claim holds because the initial skill distributions are equal by assumption.

\emph{Inductive step.}
Assume that for some $t\ge 0$,
$
\expectation_{\policy^\ai}[\phi(\hstate_t)]
\ge
\expectation_{\tilde{\policy}^\ai}[\phi(\hstate_t)]$
for every nondecreasing $\phi:\hsset\to\reals$.
Let $\phi:\hsset\to\reals$ be any nondecreasing function.
Define
\[
g^{\policy^\ai}(\hstate)
\triangleq
\expectation_{\hstate'\sim \pfakernel^{\policy^\ai}(\cdot\mid \hstate)}[\phi(\hstate')],
\qquad
g^{\tilde{\policy}^\ai}(\hstate)
\triangleq
\expectation_{\hstate'\sim \pfakernel^{\tilde{\policy}^\ai}(\cdot\mid \hstate)}[\phi(\hstate')].
\]
By the law of total expectation, the expected value of $\phi(\hstate_{t+1})$ can be computed by first conditioning on the current skill $\hstate_t$, and then averaging over $\hstate_t$.
Therefore, by definition of $g^{\policy^\ai}$, we have
$\mathbb{E}[\phi(\hstate_{t+1}) \mid \hstate_t] = g^{\policy^\ai}(\hstate_t)$, which yields:
\begin{equation}
\label{eq:tower_rule_equivalence}
\expectation_{\policy^\ai}[\phi(\hstate_{t+1})]
=
\expectation_{\policy^\ai}[g^{\policy^\ai}(\hstate_t)],
\qquad
\expectation_{\tilde{\policy}^\ai}[\phi(\hstate_{t+1})]
=
\expectation_{\tilde{\policy}^\ai}[g^{\tilde{\policy}^\ai}(\hstate_t)].
\end{equation}

We now establish two facts.

\emph{Fact 1:} \(g^{\policy^\ai}(\hstate)\ge g^{\tilde{\policy}^\ai}(\hstate)\) for every \(\hstate\in\hsset\).

For $i<N$, both kernels are supported on $\{\hstate_i,\hstate_{i+1}\}$, and $\policy^\ai$ dominates $\tilde{\policy}^\ai$ implies that
$
\pfakernel^{\policy^\ai}(\cdot\mid \hstate_i)
\succeq_{\mathrm{FOSD}}
\pfakernel^{\tilde{\policy}^\ai}(\cdot\mid \hstate_i)
$.
Since $\phi$ is nondecreasing, taking expectations preserves the order:
$
g^{\policy^\ai}(\hstate_i)\ge g^{\tilde{\policy}^\ai}(\hstate_i)
$.
At the absorbing top state $\hstate_N$, both kernels are point masses at $\hstate_N$, so equality holds.

\emph{Fact 2:} \(g^{\tilde{\policy}^\ai}\) is nondecreasing on \(\hsset\).

For $i<N$,
$
g^{\tilde{\policy}^\ai}(\hstate_i)
=
\bigl(1-p_{\uparrow}^{\tilde{\policy}^\ai}(\hstate_i)\bigr)\phi(\hstate_i)
+
p_{\uparrow}^{\tilde{\policy}^\ai}(\hstate_i)\phi(\hstate_{i+1})
$,
so \(g^{\tilde{\policy}^\ai}(\hstate_i)\) is a convex combination of \(\phi(\hstate_i)\) and \(\phi(\hstate_{i+1})\).
Therefore,
$
\phi(\hstate_i)
\le
g^{\tilde{\policy}^\ai}(\hstate_i)
\le
\phi(\hstate_{i+1})
$.
Likewise,
$
\phi(\hstate_{i+1})
\le
g^{\tilde{\policy}^\ai}(\hstate_{i+1})
\le
\phi(\hstate_{i+2})$ for $i\le N-2$,
and at the top state,
$
g^{\tilde{\policy}^\ai}(\hstate_N)=\phi(\hstate_N)
$.
Hence
$
g^{\tilde{\policy}^\ai}(\hstate_i)
\le
\phi(\hstate_{i+1})
\le
g^{\tilde{\policy}^\ai}(\hstate_{i+1})$ for $i=1,\ldots,N-1$,
so \(g^{\tilde{\policy}^\ai}\) is nondecreasing.

Using \eqref{eq:tower_rule_equivalence}, Facts 1--2, and the inductive hypothesis with the nondecreasing test function \(g^{\tilde{\policy}^\ai}\), we obtain
\begin{align*}
\expectation_{\policy^\ai}[\phi(\hstate_{t+1})]
&=
\expectation_{\policy^\ai}[g^{\policy^\ai}(\hstate_t)]
&& \text{(Law of total expectation)} \\
&\ge
\expectation_{\policy^\ai}[g^{\tilde{\policy}^\ai}(\hstate_t)]
&& \text{(Fact 1)} \\
&\ge
\expectation_{\tilde{\policy}^\ai}[g^{\tilde{\policy}^\ai}(\hstate_t)]
&& \text{(Induction + Fact 2: \(g^{\tilde{\policy}^\ai}\) is nondecreasing)} \\
&=
\expectation_{\tilde{\policy}^\ai}[\phi(\hstate_{t+1})]
&& \text{(Law of total expectation)}
\end{align*}
The claim follows for \(t+1\).
\end{proof}

\begin{remark}
Inequality
$p_{\uparrow}^{\policy^\ai}(\hstate_i) \ge p_{\uparrow}^{\tilde{\policy}^\ai}(\hstate_i)$ essentially says that from any current skill level, coach $\policy^\ai$ induces a weakly more favorable next-step skill distribution than coach $\tilde{\policy}^\ai$.
Importantly, this condition does \emph{not} force the better coach to maximize immediate success, because both success and failure may produce upskilling through $\alphaS$ and $\alphaF$, so a coach may satisfy $p_{\uparrow}^{\policy^\ai}(\hstate_i) \ge p_{\uparrow}^{\tilde{\policy}^\ai}(\hstate_i)$ either by increasing success when success is more pedagogically productive, or by strategically allowing informative failures when failure is more productive.

A sufficient condition for $p_{\uparrow}^{\policy^\ai}(\hstate_i) \ge p_{\uparrow}^{\tilde{\policy}^\ai}(\hstate_i)$ is the pointwise inequality
\begin{equation*}
\Bigl(
p_{\mathrm{succ}}^{\policy^\ai}(\hstate_i)
-
p_{\mathrm{succ}}^{\tilde{\policy}^\ai}(\hstate_i)
\Bigr)
\Bigl(
\alphaS(\hstate_i)-\alphaF(\hstate_i)
\Bigr)
\ge 0,
\qquad i=1,\ldots,N-1,
\end{equation*}
obtained by subtracting \eqref{eq:pfa_kernel_pi} for the two policies.
The condition states that the change in success probability induced by the coach aligns with the relative pedagogical value of success versus failure events, thereby increasing the learner's overall upskilling probability.
\end{remark}

We can now state the desired connection between skill dominance and \gls{VoI} dominance.

\setcounter{proposition}{0}
\begin{proposition}[Skill dominance implies \gls{VoI} dominance]
Suppose \cref{asm:V_monotone_equivalence} holds. If a coaching policy $\policy^\ai$ induces a weakly higher one-step upskill probability than $\tilde{\policy}^\ai$ at every skill level, \ie, $p_{\uparrow}^{\policy^\ai}(\hstate_i)\ge p_{\uparrow}^{\tilde{\policy}^\ai}(\hstate_i)$ for $i\in\{1,\ldots,N-1\}$, then
\begin{equation}
\expectation_{\policy^\ai}\!\left[
    \sum_{t=0}^{\infty}\discount^t\,\bar{\reward}^\ai(\hstate_t)
\right]
\ge
\expectation_{\tilde{\policy}^\ai}\!\left[
    \sum_{t=0}^{\infty}\discount^t\,\bar{\reward}^\ai(\hstate_t)
\right].
\end{equation}
\end{proposition}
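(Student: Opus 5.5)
The plan is to reduce the proposition directly to \cref{lem:kernel_to_marginal_equivalence}, applied with the test function $\phi = \bar{\reward}^\ai$, and then sum over time.

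First, \cref{asm:V_monotone_equivalence} states that $\bar{\reward}^\ai:\hsset\to\reals$ is nondecreasing. It is therefore an admissible test function for the lemma. The hypothesis $p_{\uparrow}^{\policy^\ai}(\hstate_i)\ge p_{\uparrow}^{\tilde{\policy}^\ai}(\hstate_i)$ for all $i<N$ is exactly the dominance condition of the lemma. We also take the two policies to share the initial skill distribution, the marginal of $\prob_0$ on $\hstate$, which does not depend on the coach. The lemma then gives, for every $t\ge 0$,
\[
\expectation_{\policy^\ai}\bigl[\bar{\reward}^\ai(\hstate_t)\bigr]\ \ge\ \expectation_{\tilde{\policy}^\ai}\bigl[\bar{\reward}^\ai(\hstate_t)\bigr].
\]
A key point is that $\bar{\reward}^\ai$ depends only on $\hstate$. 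Because the state is averaged against the fixed, policy-independent distribution $\evaldist$, the different physical-state trajectories induced by the two coaches do not enter the comparison. Only the skill marginals matter, and those are controlled by the lemma.

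Second, multiply each inequality by $\discount^t\ge 0$ and sum over $t$. The one step needing justification is exchanging expectation and the infinite sum. For this we use the standing assumption that $\reward^\task$ is uniformly bounded, say by $R$. Then $|\valfunc^{\ind}|\le R/(1-\discount)$, so $|\bar{\reward}^\ai|\le R/(1-\discount)$. Hence $\sum_t \discount^t|\bar{\reward}^\ai(\hstate_t)|\le R/(1-\discount)^2$ almost surely. Dominated convergence (or Fubini) then gives $\expectation[\sum_t\discount^t\bar{\reward}^\ai(\hstate_t)]=\sum_t\discount^t\expectation[\bar{\reward}^\ai(\hstate_t)]$ under each policy. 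Comparing the two series term by term yields the claim.

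The only delicate point is the implicit modelling premise, which I will state explicitly. The skill process under each coach must be a time-homogeneous Markov chain with the kernel $\pfakernel^{\policy^\ai}$. Equivalently, the success probability $p_{\mathrm{succ}}^{\policy^\ai}(\hstate_i)$ must be summarized per skill level, as in \eqref{eq:pfa_kernel_pi}, so that the lemma's induction applies. Given that setup, which the appendix adopts, no further obstacle remains. The substantive work, propagating one-step dominance to all marginals, is already done in \cref{lem:kernel_to_marginal_equivalence}.
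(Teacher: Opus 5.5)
Your proposal is correct and follows the paper's proof essentially verbatim: you invoke \cref{lem:kernel_to_marginal_equivalence} with the nondecreasing test function $\bar{\reward}^\ai$ to get termwise dominance of $\expectation[\bar{\reward}^\ai(\hstate_t)]$, then multiply by $\discount^t$ and sum, using boundedness of $\reward^\task$ to justify exchanging sum and expectation. Your explicit remarks about the shared initial skill marginal from $\prob_0$ and the time-homogeneous Markov skill kernel only spell out premises that the paper's appendix setup already adopts implicitly.
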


\begin{proof}
By \cref{asm:V_monotone_equivalence}, $\bar{\reward}^\ai$ is nondecreasing on $\hsset$.
Next, by \cref{lem:kernel_to_marginal_equivalence}, for every \(t\ge 0\),
\[
\hstate_t^{\policy^\ai}
\succeq_{\mathrm{FOSD}}
\hstate_t^{\tilde{\policy}^\ai}.
\]
Applying the test-function characterization of FOSD with the nondecreasing function \(\bar{\reward}^\ai\) yields
\[
\expectation_{\policy^\ai}[\bar{\reward}^\ai(\hstate_t)]
\ge
\expectation_{\tilde{\policy}^\ai}[\bar{\reward}^\ai(\hstate_t)],
\qquad \forall t\ge 0.
\]
Multiplying by \(\discount^t\ge 0\) and summing over \(t\) preserves the inequality.
Because $\bar{\reward}^\ai$ is bounded and $\discount \in [0,1)$, the discounted series is absolutely convergent, so we can exchange the sum and expectation using Fubini's theorem, which gives
$
\expectation_{\policy^\ai}\!\left[
    \sum_{t=0}^{\infty}\discount^t\,\bar{\reward}^\ai(\hstate_t)
\right]
\ge
\expectation_{\tilde{\policy}^\ai}\!\left[
    \sum_{t=0}^{\infty}\discount^t\,\bar{\reward}^\ai(\hstate_t)
\right]
$.
\end{proof}

\end{document}

%% file: headers/preamble.tex
\usepackage[shortlabels]{enumitem}
\usepackage{needspace}
\usepackage{subcaption}
\usepackage{colortbl}
\usepackage[xindy, acronyms, nowarn]{glossaries}
\usepackage{diagbox}

\usepackage{algorithm}
\usepackage{algorithmic}
\usepackage{wrapfig}

\usepackage{booktabs,tabularx,array,float} %

\usepackage{bbm, amsmath, amssymb, amsthm}  %
\usepackage{xcolor}
\usepackage{lipsum}
\usepackage{graphicx}
\usepackage[font=footnotesize]{caption}
\usepackage{layout}
\usepackage{verbatim}  %
\usepackage{indentfirst}
\usepackage{thmtools}
\usepackage{enumitem}
\usepackage{pifont} %
\usepackage{float}
\usepackage{wasysym}

\usepackage{array}
\usepackage{multirow}
\usepackage{longtable}  %
\usepackage{booktabs}  %
\usepackage{threeparttable}

\usepackage{hyperref}
\hypersetup{
 colorlinks=true,
 linkcolor=darkblue,
 filecolor=darkblue,
 citecolor=darkblue,      
 urlcolor=black,
 }
\makeatletter
\hypersetup{pdftitle=\@title,pdfauthor=\@author}
\makeatother

\usepackage[
    nameinlink,
    capitalize,  %
    noabbrev
]{cleveref}
\crefformat{equation}{#2(#1)#3}
\Crefformat{equation}{#2Equation~(#1)#3}

\usepackage[breakable]{tcolorbox}

\definecolor{porange}{RGB}{231, 117, 0}  %
\definecolor{nvgreen}{RGB}{118, 185, 0}  %
\definecolor{qualblue}{RGB}{50, 83, 220}  %
\definecolor{darkgreen}{RGB}{29, 177, 2}

\usepackage{amsthm}

\newtheorem{lemma}{Lemma}    

\newtheorem{remark}{Remark}

\newtheorem{proposition}{Proposition}    
\newtheorem{assumption}{Assumption}    
\newtheorem{definition}{Definition}

\DeclareDocumentEnvironment{example}{}{\textit{Running example:}}{}

\AtBeginEnvironment{proof}{\small}
\AtBeginEnvironment{example}{\small}
\AtBeginEnvironment{table}{\small}

\usepackage{ifthen}
\newboolean{include-notes}
\newboolean{mark-new}
\newboolean{include-remove}
\newboolean{include-frontmatter} %
\newboolean{hide-chapters}

\setboolean{include-frontmatter}{true}
\setboolean{include-notes}{true}
\setboolean{mark-new}{false}
\setboolean{include-remove}{false}

\usepackage[normalem]{ulem}
\definecolor{exampleback}{rgb}{0.6, 0.6, 0.6}
\definecolor{teal}{rgb}{0, 0.5, 0.5}
\definecolor{darkgreen}{RGB}{0,100,0}
\definecolor{darkblue}{RGB}{0,45,114}
\definecolor{porange}{RGB}{231, 117, 0}  %
\definecolor{turquoise}{RGB}{78, 190, 186}  %
\definecolor{lightpink}{HTML}{F1AFF7}  %
\newcommand{\haimin}[1]{\ifthenelse{\boolean{include-notes}}{\textcolor{magenta}{\textbf{HH: #1}}}{}}
\newcommand{\wei}[1]{\ifthenelse{\boolean{include-notes}}{\textcolor{porange}{\textbf{Wei: #1}}}{}}

\newcommand{\remove}[1]{\ifthenelse{\boolean{include-remove}}{\textcolor{red}{\sout{#1}}}{}}
\newcommand{\new}[1]{\ifthenelse{\boolean{mark-new}}{\textcolor{teal}{#1}}{#1}}

\newcommand\blfootnote[1]{%
  \begingroup
  \renewcommand\thefootnote{}\footnote{#1}%
  \addtocounter{footnote}{-1}%
  \endgroup
}

\newcolumntype{C}[1]{>{\centering\arraybackslash}m{#1} } %
\renewcommand{\arraystretch}{1.3}

\usepackage{tcolorbox}
\tcbuselibrary{skins}

\newtcolorbox{l2cquote}{
  enhanced,
  colback=orange!15,        %
  colframe=orange!70!black,
  arc=3mm,                  %
  boxrule=0.5pt,
  left=2mm, right=2mm, top=1.5mm, bottom=1.5mm,
  fontupper=\small,
  before skip=4pt, after skip=4pt,
}

\newtcolorbox{rbfquote}{
  enhanced,
  colback=black!10,
  colframe=black!70,
  arc=3mm,
  boxrule=0.5pt,
  left=2mm, right=2mm, top=1.5mm, bottom=1.5mm,
  fontupper=\small,
  before skip=4pt, after skip=4pt,
}

\newtcolorbox{miaquote}{
  enhanced,
  colback=gray!15,
  colframe=gray!60,
  arc=3mm,
  boxrule=0.5pt,
  left=2mm, right=2mm, top=1.5mm, bottom=1.5mm,
  fontupper=\small,
  before skip=4pt, after skip=4pt,
}

%% file: headers/gls.tex
\glsdisablehyper
\makeglossaries

\newglossaryentry{FPV}
{
  name={FPV},
  description={first-person view},
  first={first-person view (\glsentrytext{FPV})},
}

\newglossaryentry{POSG}
{
  name={POSG},
  plural={POSGs},
  description={partially observable stochastic game},
  first={partially observable stochastic game (\glsentrytext{POSG})},
  descriptionplural={partially observable stochastic games},
  firstplural={partially observable stochastic games (\glsentryplural{POSGs})}
}

\newglossaryentry{POMDP}
{
    name={POMDP},
    plural={POMDPs},
    description={partially observable Markov decision process},
    first={partially observable Markov decision process (\glsentrytext{POMDP})},
    descriptionplural={partially observable Markov decision processes},
    firstplural={partially observable Markov decision processes (\glsentryplural{POMDPs})}
}

\newglossaryentry{VoI}
{
  name={VoI},
  description={Value of Independence},
  first={Value of Independence (\glsentrytext{VoI})},
}

\newglossaryentry{L2C}
{
  name={L2C},
  description={Learning to Coach},
  first={Learning to Coach (\glsentrytext{L2C})},
}

\newglossaryentry{RBF}
{
  name={RBF},
  description={Rule-based Fading},
  first={Rule-based Fading (\glsentrytext{RBF})},
}

\newglossaryentry{MIA}
{
  name={MIA},
  description={Minimally-invasive Assistance},
  first={Minimally-invasive Assistance (\glsentrytext{MIA})},
}

\newglossaryentry{FCP}
{
  name={FCP},
  description={Fictitious Co-Play},
  first={Fictitious Co-Play (\glsentrytext{FCP})},
}

\newglossaryentry{PFA}
{
  name={PFA},
  description={probabilistic finite-state automaton},
  first={probabilistic finite-state automaton (\glsentrytext{PFA})},
}

\newglossaryentry{LSE}
{
  name={LSE},
  plural={LSEs},
  description={local Stackelberg equilibrium},
  first={local Stackelberg equilibrium (\glsentrytext{LSE})},
  descriptionplural={local Stackelberg equilibria},
  firstplural={local Stackelberg equilibria (\glsentryplural{LSE})}
}

\newglossaryentry{FSE}
{
  name={FSE},
  plural={FSEs},
  description={feedback Stackelberg equilibrium},
  first={feedback Stackelberg equilibrium (\glsentrytext{FSE})},
  descriptionplural={feedback Stackelberg equilibria},
  firstplural={feedback Stackelberg equilibria (\glsentryplural{FSE})}
}

\newglossaryentry{DSE}
{
  name={DSE},
  plural={DSEs},
  description={differential Stackelberg equilibrium},
  first={differential Stackelberg equilibrium (\glsentrytext{DSE})},
  descriptionplural={differential Stackelberg equilibria},
  firstplural={differential Stackelberg equilibria (\glsentryplural{DSE})}
}

\newglossaryentry{RL}
{
  name={RL},
  description={reinforcement learning},
  first={reinforcement learning (\glsentrytext{RL})},
}

\newglossaryentry{HJI}
{
  name={HJI},
  description={Hamilton--Jacobi--Isaacs},
  first={Hamilton--Jacobi--Isaacs (\glsentrytext{HJI})},
}

\newglossaryentry{GAS}
{
  name={GAS},
  description={globally asymptotically stable},
  first={globally asymptotically stable (\glsentrytext{GAS})},
}

\newglossaryentry{a.s.}
{
  name={a.s.},
  description={almost surely},
  first={almost surely (\glsentrytext{a.s.})},
}

\newglossaryentry{MDP}
{
  name={MDP},
  description={Markov decision process},
  first={Markov decision process (\glsentrytext{MDP})},
}

\newglossaryentry{SAC}
{
  name={SAC},
  description={Soft Actor--Critic},
  first={Soft Actor--Critic (\glsentrytext{SAC})},
}

\newglossaryentry{tGDA}
{
  name={$\tau$-GDA},
  description={gradient descent-ascent with finite timescale separation},
  first={gradient descent-ascent with finite timescale separation (\glsentrytext{tGDA})},
}

\newglossaryentry{MAGICS}
{
  name={MAGICS},
  description={Minimax Actors Guided by Implicit Critic Stackelberg},
  first={Minimax Actors Guided by Implicit Critic Stackelberg (\glsentrytext{MAGICS})},
}

\newglossaryentry{ISAACS}
{
  name={ISAACS},
  description={Iterative Soft Adversarial Actor--Critic for Safety},
  first={Iterative Soft Adversarial Actor--Critic for Safety (\glsentrytext{ISAACS})},
}

\newglossaryentry{ISAACStbg}
{
  name={ISAAC$\mathcal{S}$},
  description={Iterative Soft Adversarial Actor--Critic with Stackelberg learning dynamics for Safety},
  first={Iterative Soft Adversarial Actor--Critic with Stackelberg learning dynamics for Safety (\glsentrytext{ISAACStbg})},
}

\newglossaryentry{A2C}
{
  name={A2C},
  description={Advantage Actor--Critic},
  first={Advantage Actor-Critic (\glsentrytext{A2C})},
}

\newglossaryentry{iHvp}
{
  name={iHvp},
  description={inverse hessian vector product},
  first={inverse-Hessian-vector product
  (\glsentrytext{iHvp})},
}

\newglossaryentry{jvp}
{
  name={jvp},
  description={jacobian vector product},
  first={Jacobian-vector product
  (\glsentrytext{jvp})},
}

\newglossaryentry{GSE}
{
  name={GSE},
  description={global Stackelberg equilibrium},
  first={global Stackelberg equilibrium (\glsentrytext{GSE})}
}

\newglossaryentry{BNP}
{
  name={B\&P},
  description={Branch-and-Play},
  first={Branch-and-Play (\glsentrytext{BNP})},
}

\newglossaryentry{LQ}
{
  name={LQ},
  description={linear quadratic},
  first={linear quadratic (\glsentrytext{LQ})},
}

\newglossaryentry{ILQR}
{
  name={ILQR},
  description={iterative linear quadratic regulator},
  first={iterative linear quadratic regulator (\glsentrytext{ILQR})},
}

\newglossaryentry{ATC}
{
  name={ATC},
  description={air traffic control},
  first={air traffic control (\glsentrytext{ATC})},
}

\newglossaryentry{STP}
{
  name={STP},
  description={sequential trajectory planning},
  first={sequential trajectory planning (\glsentrytext{STP})},
}

\newglossaryentry{FCFS}
{
  name={FCFS},
  description={first-come-first-served},
  first={first-come-first-served (\glsentrytext{FCFS})},
}

\newglossaryentry{MAPF}
{
  name={MAPF},
  description={multi-agent pathfinding},
  first={Multi-agent pathfinding (\glsentrytext{MAPF})},
}

\newglossaryentry{MPC}
{
  name={MPC},
  description={model predictive control},
  first={model predictive control (\glsentrytext{MPC})},
}

\newglossaryentry{NOD}
{
  name={NOD},
  plural={NOD},
  description={nonlinear opinion dynamics},
  first={nonlinear opinion dynamics (\glsentrytext{NOD})},
  descriptionplural={nonlinear opinion dynamics},
  firstplural={nonlinear opinion dynamics (\glsentryplural{NOD})}
}

\newglossaryentry{DNN}
{
  name={DNN},
  description={deep neural network},
  first={deep neural network (\glsentrytext{DNN})},
}

\newglossaryentry{MLE}
{
  name={MLE},
  description={maximum likelihood estimation},
  first={maximum likelihood estimation (\glsentrytext{MLE})},
}

\newglossaryentry{ODG}
{
  name={ODG},
  description={opinion-guided dynamic game},
  first={opinion-guided dynamic game (\glsentrytext{ODG})},
}

\newglossaryentry{MLP}
{
  name={MLP},
  description={multilayer perceptron},
  first={multilayer perceptron (\glsentrytext{MLP})},
}

\newglossaryentry{E2E-BC}
{
  name={E2E-BC},
  description={end-to-end behavior cloning},
  first={End-to-end behavior cloning (\glsentrytext{E2E-BC})},
}

\newglossaryentry{HJ}
{
  name={HJ},
  description={Hamilton--Jacobi},
  first={Hamilton--Jacobi (\glsentrytext{HJ})}
}

\newglossaryentry{DCBF}
{
  name={DCBF},
  description={discrete-time control barrier function},
  first={discrete-time control barrier function (\glsentrytext{DCBF})}
}

\newglossaryentry{CBF}
{
  name={CBF},
  description={control barrier function},
  first={control barrier function (\glsentrytext{CBF})}
}

\newglossaryentry{Q-CBF}
{
  name={Q-CBF},
  description={state--action control barrier function},
  first={state--action control barrier function (\glsentrytext{Q-CBF})}
}

\newglossaryentry{ODD}
{
  name={ODD},
  description={Operational Design Domain},
  first={operational design domain (\glsentrytext{ODD})}
}

\newglossaryentry{LRSF}
{
  name={LRSF},
  description={Last-Resort Safety Filter},
  first={last-resort safety filter (\glsentrytext{LRSF})}
}

\newglossaryentry{HCSF}
{
  name={HCSF},
  description={Human-Centered Safety Filter},
  first={human-centered safety filter (\glsentrytext{HCSF})}
}

\newglossaryentry{NLP}
{
  name={NLP},
  description={nonlinear programming problem},
  first={nonlinear programming problem (\glsentrytext{NLP})},
}

\newglossaryentry{AC}
{
  name={AC},
  description={Assetto Corsa},
  first={Assetto Corsa (\glsentrytext{AC})},
}

\newglossaryentry{ANOVA}
{
    name={ANOVA},
    description={Analysis of Variance},
    first={analysis of variance (\glsentrytext{ANOVA})}
}

\newglossaryentry{SME}
{
    name={SME},
    description={Simple Main Effects},
    first={simple main effects (\glsentrytext{SME})},
}

\newglossaryentry{HSD}
{
    name={HSD},
    description={Honestly Significant Difference},
    first={honestly significant difference (\glsentrytext{HSD})},
}

\newglossaryentry{ECDF}
{
    name={ECDF},
    description={Empirical Cumulative Distribution Function},
    first={empirical cumulative distribution function (\glsentrytext{ECDF})},
}

\newglossaryentry{OCP}
{
    name={OCP},
    description={Optimal Control Problem},
    first={optimal control problem (\glsentrytext{OCP})}
}

\newglossaryentry{AI}
{
    name={AI},
    description={Artificial Intelligence},
    first={artificial intelligence (\glsentrytext{AI})}
}

\newglossaryentry{HRI}
{
    name={HRI},
    description={Human--Robot Interaction},
    first={human--robot interaction (\glsentrytext{HRI})}
}

%% file: headers/notation.tex
\usepackage{fontawesome5}  %

\newcommand{\ourgame}{{\textit{Coaching Game}}}
\newcommand{\ai}{{C}}
\newcommand{\transprob}{{T}}
\newcommand{\hstate}{{\theta}} %
\newcommand{\hsset}{{\Theta}}
\newcommand{\blend}{{\lambda}}

\newcommand{\pindex}{\mathcal{I}} %
\newcommand{\history}{{h}}
\newcommand{\ind}{{\text{ind}}}
\newcommand{\evaldist}{\distr_{\mathrm{eval}}}

\newcommand{\pfakernel}{P}

\newcommand{\alphaS}{\alpha_S}
\newcommand{\alphaF}{\alpha_F}
\newcommand{\betaS}{\beta_S}

\newcommand{\p}[1]{\smallskip \noindent \textbf{{#1}.}}

\DeclareMathOperator*{\argmax}{{\mathop{\mathrm{argmax}}}}

\newcommand{\reals}{\mathbb{R}}

\newcommand{\distr}{p}
\newcommand{\prob}{{P}}

\newcommand{\bel}{b}
\newcommand{\belspace}{\Delta}
\DeclareMathOperator*{\expectation}{\mathbb{E}}

\newcommand{\state}{{s}}
\newcommand{\action}{{a}}
\newcommand{\jstate}{\bar{s}}

\newcommand{\obsset}{{\mathcal{O}}}
\newcommand{\obsfunc}{{g}}

\newcommand{\sset}{{\mathcal{S}}}
\newcommand{\aset}{{\mathcal{A}}}

\newcommand{\game}{{\mathcal{G}}}

\newcommand{\valfunc}{{V}}

\newcommand{\qfunc}{{Q}}
\newcommand{\qfuncopt}{\qfunc^*}

\newcommand{\policy}{{\pi}}
\newcommand{\policyopt}{\policy^*}

\newcommand{\human}{{L}}

\newcommand{\reward}{{r}}

\newcommand{\discount}{{\gamma}}

\newcommand{\task}{{\text{task}}}

\newcommand{\obs}{{o}}
\newcommand{\obsh}{\obs^\human}
\newcommand{\obsr}{\obs^\ai}

\newcommand{\eg}{\text{e.g.}}
\newcommand{\ie}{\text{i.e.}}

\newcommand{\princeton}[1]{\ifthenelse{\boolean{include-notes}}{\textcolor{orange}{#1}}{}}